\def\1{\bm{1}}
\DeclareMathAlphabet{\mathsfit}{\encodingdefault}{\sfdefault}{m}{sl}
\SetMathAlphabet{\mathsfit}{bold}{\encodingdefault}{\sfdefault}{bx}{n}
\theoremstyle{plain}
\theoremstyle{remark}
\theoremstyle{plain}
\newcommand{\marco}[1]{\textcolor{red}{MM: #1}}
\newcommand{\peter}[1]{\textcolor{blue}{PS: #1}}
\newtheorem{theorem}{Theorem}[section]
\newtheorem{prop}[theorem]{Proposition}
\newtheorem{assumption}[theorem]{Assumption}
\newtheorem{lemma}[theorem]{Lemma}
\def\b0{{0}}
\def\RR{\mathbb{R}}
\def\>{\rangle}
\newcommand{\norm}[1]{\left\lVert#1\right\rVert}
\newcommand{\svmin}[1]{\sigma_{\textrm{min}}\left(#1\right)}
\def\min{\mathop{\rm min}\nolimits}
\def\max{\mathop{\rm max}\nolimits}
\title{Wide Neural Networks Trained with Weight Decay \\Provably Exhibit Neural Collapse}
\author{Arthur Jacot\thanks{Courant Institute of Mathematical Sciences, NYU. Email: \texttt{arthur.jacot@nyu.edu}}, \,\,\,Peter Súkeník\thanks{Institute of Science and Technology Austria. Email: \texttt{peter.sukenik@ista.ac.at}}, \,\,\,Zihan Wang\thanks{Courant Institute of Mathematical Sciences, NYU. Email: \texttt{zw3508@nyu.edu}}, \,\,\,and Marco Mondelli\thanks{Institute of Science and Technology Austria. Email: \texttt{marco.mondelli@ist.ac.at}}}
\begin{document}

\maketitle

\begin{abstract}
Deep neural networks (DNNs) at convergence consistently represent the training data in the last layer via a highly symmetric geometric structure referred to as neural collapse. This empirical evidence has spurred a line of theoretical research aimed at proving the emergence of neural collapse, mostly focusing on the unconstrained features model. Here, the features of the penultimate layer are 
free variables, which makes the model data-agnostic and, hence, puts into question its ability to capture DNN training. Our work addresses the issue, moving away from unconstrained features and studying DNNs that end with at least two linear layers. We first prove generic guarantees on neural collapse that assume \emph{(i)} low training error and balancedness of the linear layers (for within-class variability collapse), and \emph{(ii)} bounded conditioning of the features before the linear part (for orthogonality of class-means, as well as their alignment with weight matrices). We then show that such assumptions hold 
for gradient descent training with weight decay: \emph{(i)} for networks with a wide first layer, we prove low training error and balancedness, and \emph{(ii)} for solutions that are either nearly optimal or stable under large learning rates, we additionally prove the bounded conditioning. Taken together, our results are the first to show neural collapse in the end-to-end training of DNNs. 
\end{abstract}

\section{Introduction}

Among the many possible interpolators that a deep neural network (DNN) can find,  
\citet{papyan2020prevalence} showed a strong bias of gradient-based training 
towards representations with a highly symmetric structure in the penultimate layer, which was dubbed \textit{neural collapse} (NC). In particular, the feature vectors of the training data in the penultimate layer collapse to a single vector per class (NC1); these vectors form orthogonal or simplex equiangular tight frames (NC2), and they are aligned with the last layer's row weight vectors (NC3). The question of why and how neural collapse emerges has been considered by 
a popular line of research, 
see e.g.\ \citet{lu2022neural, wojtowytsch2020emergence} and the discussion in Section~\ref{sec:related_work}. Many of these works focus on a simplified mathematical framework: the unconstrained features model (UFM) \citep{mixon2020neural, han2021neural, zhou2022optimization}, corresponding to the joint optimization over the last layer's weights and the penultimate layer's feature representations, which are treated as free variables. To account for the existence of the training data and of all the layers before the penultimate (i.e., the backbone of the network), some form of regularization on the free features is usually added. A number of papers has proved the optimality of NC in this model \citep{lu2022neural, wojtowytsch2020emergence}, its emergence with gradient-based methods \citep{mixon2020neural, han2021neural} and a benign loss landscape \citep{zhou2022optimization, zhu2021geometric}. However, the major drawback of the UFM lies in its data-agnostic nature: it only acknowledges the presence of training data and backbone through a simple form of regularization (e.g., Frobenius norm or sphere constraint), which is far from being equivalent to end-to-end training. Moving beyond UFM, 
existing results are either only applicable to rather shallow networks (of at most three layers) \citep{kothapalli2024kernel, hong2024beyond} or hold under strong assumptions, such as symmetric quasi-interpolation \citep{xu2023dynamics, rangamani2022neural}, block-structured empirical NTK throughout training \citep{seleznova2023neural}, or geodesic structure of the features across all layers \citep{wang2024progressive}. 

In this paper, we provide the first end-to-end proof of within-class variability collapse (NC1) for a class of networks that end with at least two linear layers. 
Furthermore, we give rather weak sufficient conditions -- either near-optimality or stability under large learning rates -- for solutions to exhibit the orthogonality of class-means (NC2) and the alignment of class means with the last weight matrix (NC3). 
More precisely, our contributions can be summarized as follows:
\begin{itemize}[leftmargin=6mm]
    \item First, we show that within-class variability collapse (i.e., NC1) occurs, 
    as long as the training error is low and the linear layers are approximately balanced, i.e., $\left\Vert W_{\ell+1}^TW_{\ell+1}-W_\ell W_\ell ^T\right\Vert_F$ is small, where $W_{\ell+1}, W_\ell$ are two consecutive weight matrices of the linear head. If, additionally, the conditioning of the linear head mapping is bounded, we bound the conditioning of the matrix of class means in the last layer, as well as that of the last weight matrix. This implies that, as the number of linear layers grows,  
    class means become orthogonal and, furthermore, they align with the last layer's row vectors, which proves both NC2 and NC3.
    \item Next, we show that the sufficient conditions above for NC1 are satisfied by a class of deep 
    networks with a wide first layer, smooth activations and pyramidal topology, after gradient training with weight decay. 
    This provides the first guarantee of the emergence of NC1 for a deep network trained end-to-end via gradient descent. 
    \item We further present two sufficient conditions under which the linear head is guaranteed to be 
    well-conditioned, hence NC1, NC2 and NC3 hold: either the network approaches a global optimum of the $\ell_2$-regularized square loss, or it nearly interpolates the data while being stable under large learning rates. 
    \item Our numerical experiments on various architectures (fully connected, ResNet) and datasets (MNIST, CIFAR) confirm the insights coming from the theory: \emph{(i)} NC2 is more prominent as the depth of the linear head increases, and \emph{(ii)} the final linear layers are balanced at convergence. Furthermore, we show that, as the non-linear part of the network gets deeper, the non-negative layers become less non-linear and more balanced.
\end{itemize}

\section{Related Work} \label{sec:related_work}

\paragraph{Neural collapse.} Since its introduction by \citet{papyan2020prevalence}, neural collapse has been intensively studied, both from a theoretical and practical viewpoint. Practitioners use the NC for a number of applications, including transfer learning, OOD detection and generalization bounds \citep{galanti2022improved, haas2022linking, ben2022nearest, li2022principled, li2023no, zhang2024epa}. On the theoretical front, 
the most widely adopted framework to study the emergence of NC is the unconstrained features model (UFM) \citep{mixon2020neural, fang2021exploring}. 
Under the UFM, the NC has been proved to be optimal with cross-entropy loss \citep{wojtowytsch2020emergence, lu2022neural, kunin2022asymmetric}, MSE loss \citep{zhou2022optimization} and other losses \citep{zhou2022all}. Optimality guarantees for the generalization of NC to the class-imbalanced setting have been provided by \citet{fang2021exploring, thrampoulidis2022imbalance, hong2023neural, dang2024neural}. Besides global optimality, a benign loss landscape around NC solutions has been proved in \citet{zhu2021geometric, ji2021unconstrained, zhou2022optimization}, and the emergence of NC with gradient-based optimization under UFM has been studied by \citet{mixon2020neural, han2021neural, ji2021unconstrained, wang2022linear}. \citet{jiang2023generalized} extend the analysis to large number of classes, \citet{kothapalli2023neural} to graph neural networks, \citet{tirer2022perturbation} generalize UFM with a perturbation to account for its imperfections, while \cite{andriopoulos2024prevalence} generalize UFM and NC to regression problems. A deep neural collapse is theoretically analyzed with deep UFM for the linear case in \citet{dang2023neural, garrod2024unifying}, for the non-linear case with two layers in \citet{tirer2022extended}, and for the deep non-linear case in \citet{sukenik2023deep, sukenik2024neural}, where, notably, the latter work provides the first negative result on deep NC with deep UFM. 

A line of recent work aims at circumventing the data-agnostic nature of UFM, showing the emergence of neural collapse in settings closer to practice. Specifically, \citet{seleznova2023neural} 
assume a block structure in the empirical NTK matrix. Kernels are used by \citet{kothapalli2024kernel} to analyze NC in wide two-layer networks, showing mostly negative results in the NTK regime. \citet{beaglehole2024average} prove the emergence of deep neural collapse using kernel-based layer-wise 
training, 
and they show that NC is an optimal solution of adaptive kernel ridge regression in an over-parametrized regime. Sufficient (but rather strong) conditions for the emergence of NC beyond UFM are provided by \citet{pan2023towards}. \citet{hong2024beyond} analyze two and three layer networks end-to-end, but only provide conditions under which the UFM optimal solutions are feasible. 
\citet{wang2024progressive} consider the NC formation in residual networks, and the model is similar to the perturbed UFM in \citet{tirer2022perturbation}. However, the results crucially assume that the features lie on a geodesic in euclidean space, and proving that this is the case after training the ResNet is an open problem. 
%
 \citet{rangamani2022neural, xu2023dynamics} focus on 
 homogeneous networks trained via gradient-based methods, making the strong assumption of symmetric quasi-interpolation. 
 Then, \citet{rangamani2022neural} do not prove this assumption, and the argument of \cite{xu2023dynamics} requires a regularization different from the one used in practice as well as interpolators with a given norm (whose existence is an open question). 
 We also note that the quasi-interpolation property does not hold in practice exactly, which points to the need of a perturbation analysis. 


\paragraph{Implicit bias.} Our approach 
leverages the (approximate) balancedness of the weights, which plays a central role in the analysis of the training dynamics of linear networks \citep{arora_2018_optimization} and leads to a bias towards low-rank matrices \citep{arora_2019_matrix_factorization,tu_2024+mixeddynamicslinearnetworks}. In the presence of weight decay, the low-rank bias can be made even more explicit \citep{dai_2021_repres_cost_DLN}, and it is reinforced by stochastic gradient descent \citep{wang2023implicit}. Moving towards nonlinear models, 
shallow networks with weight decay exhibit low-rank bias, as described by the 
variational norm \citep{bach2017_F1_norm}, or Barron norm \citep{weinan_2019_barron}, and this bias provably emerges under (modified) GD dynamics \citep{abbe2022merged_staircase,bietti_2022_single_index,lee_2024_low_polynomial}. However, a single hidden layer appears to be insufficient to exhibit NC for most datasets. Moving towards deep nonlinear models, networks with weight decay are also known to exhibit low-rank bias \citep{galanti_2022_sgd_low_rank,jacot2022feature}, which can be related to neural collapse \citep{zangrando2024neural}. The dimensionality and rank of the weights/representations varies between layers, exhibiting a 
bottleneck structure \citep{jacot2022implicit,jacot_2023_bottleneck2,wen2024frequencies}. We remark that existing results apply to the global minima of the $\ell_2$-regularized loss. In contrast, our paper provides rather general sufficient conditions that are then provably satisfied by GD, thus showing that the training algorithm is responsible for neural collapse. 


\section{Balancedness and Interpolation Imply Neural Collapse}



\paragraph{Notation and problem setup.} Given a matrix $A$ of rank $k$, we denote by $A_{i:}$ its $i$-th row, 
by $A_{:i}$ its $i$-th column, by $s_1(A)\ge \cdots \ge s_k(A)$ its singular values in non-increasing order, and by $\kappa(A)$ the ratio $\frac{s_1(A)}{s_k(A)}$ between its largest and smallest non-zero singular values. We denote by $\|A\|_F, \|A\|_{op}$ and $ \svmin{A}$ its Frobenius norm, its operator norm and its smallest singular value, respectively. 

We consider a neural network with $L_1$ non-linear layers with activation function $\sigma:\RR\to\RR$ followed by $L_2$ linear layers. Let $L:=L_1+L_2$ be the total number of layers, $W_\ell\in\RR^{n_{\ell}\times n_{\ell-1}}$ the weight matrix at layer $\ell$, $X\in\RR^{d\times N}$ the training data and $Y\in\RR^{K\times N}$ the labels (for consistency, we set $n_0=d$ and $n_L=K$), where the output dimension $K$ corresponds to the number of classes and $N$ is the number of samples. 
We consider a one-hot encoding, i.e., the rows of $Y$ are elements of the canonical basis. 
Let $Z_\ell\in\RR^{n_\ell\times N}$ be the output of layer $\ell$, given by 
\begin{align}\label{eq:F_l}
    Z_\ell = 
    \begin{cases}
	X & \ell=0,\\
	\sigma \big( W_\ell Z_{\ell-1}  \big) & \ell\in [L_1],\\
	W_\ell Z_{\ell-1}  & \ell\in \{L_1+1, \ldots, L_1+L_2\},
    \end{cases}
\end{align}
where the activation function $\sigma$ is applied componentwise and, given an integer $n$, we use the shorthand $[n]:=\{1, \ldots, n\}$.  
We write $\theta$ for the vector obtained by concatenating all parameters $\{W_i\}_{i\in[L]}$ and $W_{m:\ell}$ for the partial products of the weight matrices $W_m\cdots W_\ell$, so that $Z_m = W_{m:\ell+1} Z_\ell$ for all $m\geq\ell\in\{L_1+1,\dots,L_1+L_2\}$.


We index individual samples in a feature matrix $Z$ as $z_{ci},$ meaning the $i$-th sample of the $c$-th class, sometimes adding an upper-index to denote the layer or matrix to which the sample belongs. Let $\mu_c$ denote the mean of all samples from class $c$ and $\mu_G$ the global mean. Let $\Bar{Z}$ be the matrix of class-means stacked into columns. The NC1 metric on a feature matrix $Z$ is given by  $\frac{\text{tr}(\Sigma_W)}{\text{tr}(\Sigma_B)},$ where $\Sigma_W=\frac{1}{N}\sum_{c, i}(z_{ci}-\mu_c)(z_{ci}-\mu_c)^\top$ and $\Sigma_B=\frac{1}{K}\sum_{c=1}^{K} (\mu_c-\mu_G)(\mu_c-\mu_G)^\top.$ The NC2 metric on a feature matrix $Z$ is defined as $\kappa(\bar{Z})$, i.e., the conditioning number of the class-mean matrix of $Z.$ 
The NC3 metric on a feature matrix $Z$ and a weight matrix $W$ is defined as $\frac{1}{N}\sum_{c,i}\cos(z_{ci}, W_{c:}),$ i.e., the average cosine similarity of features and weight vectors corresponding to the features' class.
%

At this point, we 
state our 
result giving a set of sufficient conditions for NC1, NC2 and NC3.

\begin{restatable}{theorem}{metaresult}\label{th:NC1}
If the network satisfies
\begin{itemize}[leftmargin=6mm]
\item approximate interpolation, i.e., $\left\Vert Z_{L}-Y\right\Vert _{F}\leq\epsilon_{1}$,
\item approximate balancedness, i.e., $\left\Vert W_{\ell+1}^{\top}W_{\ell+1}-W_{\ell}W_{\ell}^{\top}\right\Vert _{op}\leq\epsilon_{2}$, for $\ell\in \{L_1+1, \ldots, L-1\}$,
\item bounded representations and weights, i.e., $\left\Vert Z_{L-2}\right\Vert _{op}, \left\Vert Z_{L-1}\right\Vert _{op}, \left\Vert W_{\ell}\right\Vert _{op}\leq r$, for $\ell\in \{L_1+1, \ldots, L\}$,
\end{itemize}
then if $\epsilon_1\le \min \left(s_K(Y),  \sqrt{\frac{(K-1)N}{4K}}\right)$, 
\begin{equation}\label{eq:NC1}
\text{NC1}(Z_{L-1})\le \frac{r^2}{N}\frac{(\Psi(\epsilon_1, \epsilon_2, r))^2}{\left(\sqrt{\frac{K-1}{K}}-\frac{2}{\sqrt{N}}\epsilon_1\right)^2} ,
\end{equation}
where $\Psi(\epsilon_1, \epsilon_2, r)=r\left(\frac{\epsilon_{1}}{s_K(Y)-\epsilon_1}+\sqrt{n_{L-1} \epsilon_2}\right)$.
If we additionally assume that the linear part of the network is not too ill-conditioned, i.e., $\kappa(W_{L:L_1+1})\le c_3$, then
\begin{equation}\label{eq:NC2}
    \kappa(W_L)\le c_3^{\frac{1}{L_2}}(1+\epsilon)^{\frac{1}{L_2}}+{c_3^{\frac{1}{L_{2}}-1}}\epsilon,
\end{equation}
with 
$\epsilon \hspace{-.1em}=\hspace{-.1em} \frac{\frac{L_2^2}{2}r^{2(L_2-1)}\epsilon_2}{\frac{(s_{K}(Y)-\epsilon_{1})^{2}}{\left\Vert X\right\Vert _{op}^{2}r^{2L_{1}}}-\frac{L_2^2}{2}r^{2(L_2-1)}\epsilon_2}$. 
Finally, 
\begin{align}\label{eq:NC2*}
    \text{NC2}(Z_{L-1})&\le 
    \frac{\kappa(W_L)+s_K(Y)^{-1}r\Psi(\epsilon_1, \epsilon_2, r)}{1-s_K(Y)^{-1}r\Psi(\epsilon_1, \epsilon_2, r)} \\ 
%
%
\label{eq:NC3}
\text{NC3}(Z_{L-1}, W_L)&\ge  \frac{(\sqrt{N}-\epsilon_1)^2+N(\kappa(W_L))^{-2}-\left(r\Psi(\epsilon_1, \epsilon_2, r)+\sqrt{K}(\kappa(W_L)^2-1)\right)^2}{2N\kappa(W_L)(1+\epsilon_1)}.
\end{align}
\end{restatable}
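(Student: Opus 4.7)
The plan is to derive all four bounds in the theorem from a single \emph{master estimate}, namely
\[
\bigl\|Z_{L-1} - W_L^\dagger Y\bigr\|_F \le \Psi(\epsilon_1,\epsilon_2,r), \qquad W_L^\dagger := W_L^\top (W_L W_L^\top)^{-1},
\]
in which $W_L^\dagger Y$ plays the role of an ``ideal collapsed target'': its columns depend on each sample only through the class label, so $\overline{W_L^\dagger Y} = W_L^\dagger$ and $W_L^\dagger Y (I - P_C) = 0$, where $P_C$ is the orthogonal projector onto the row-span of the class-indicator matrix. Once this estimate is in hand, the four conclusions will follow from: (i) projecting $Z_{L-1}$ orthogonally to the class-mean subspace for NC1, (ii) telescoping balancedness across the $L_2$ linear head layers for the bound on $\kappa(W_L)$, (iii) Weyl's inequality on the class-mean matrix for NC2, and (iv) the polarization identity $2\cos(a,b)\|a\|\|b\| = \|a\|^2+\|b\|^2-\|a-b\|^2$ applied sample-wise for NC3.

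To establish the master estimate, I first invoke Weyl's inequality together with $s_K(W_L Z_{L-1}) \le s_K(W_L)\|Z_{L-1}\|_{\mathrm{op}}$ to obtain $s_K(W_L) \ge (s_K(Y)-\epsilon_1)/r$, so $W_L$ has full row rank. I then split $Z_{L-1} = P_L Z_{L-1} + (I-P_L)Z_{L-1}$, where $P_L$ projects onto $\mathrm{range}(W_L^\top)$. Both $P_L Z_{L-1}$ and $W_L^\dagger Y$ lie in $\mathrm{range}(W_L^\top)$, on which $W_L$ is injective with minimum singular value $s_K(W_L)$; since $W_L(P_L Z_{L-1} - W_L^\dagger Y) = Z_L - Y$, this yields the first term $r\epsilon_1/(s_K(Y)-\epsilon_1)$ of $\Psi$. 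For the orthogonal part I write $(I-P_L) Z_{L-1} = (I-P_L) W_{L-1} Z_{L-2}$ and use the exact identity $(I-P_L) W_L^\top W_L (I-P_L) = 0$: approximate balancedness then yields $\|(I-P_L) W_{L-1} W_{L-1}^\top (I-P_L)\|_{\mathrm{op}} \le \epsilon_2$, hence $\|(I-P_L) W_{L-1}\|_{\mathrm{op}}\le\sqrt{\epsilon_2}$. A rank bound of $n_{L-1}-K$ on $(I-P_L) W_{L-1}$ converts this to a Frobenius bound of $\sqrt{n_{L-1}\epsilon_2}$, and multiplying by $\|Z_{L-2}\|_{\mathrm{op}}\le r$ produces the second term $r\sqrt{n_{L-1}\epsilon_2}$ of $\Psi$.

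The four conclusions then follow in sequence. NC1 comes from $Z_{L-1}(I-P_C) = (Z_{L-1}-W_L^\dagger Y)(I-P_C)$, which gives $\mathrm{tr}(\Sigma_W) \le \Psi^2/N$, together with a lower bound on $\mathrm{tr}(\Sigma_B)$ obtained by passing through $W_L \bar Z_{L-1} = \bar Z_L \approx \bar Y$ and using $\|W_L\|_{\mathrm{op}}\le r$. For the $\kappa(W_L)$ bound, I telescope balancedness $L_2$ times to obtain $\bigl\|W_{L:L_1+1}W_{L:L_1+1}^\top - (W_L W_L^\top)^{L_2}\bigr\|_{\mathrm{op}} \le \tfrac{L_2^2}{2} r^{2(L_2-1)} \epsilon_2$; combining with the lower bound on $s_K(W_{L:L_1+1})$ derived from $\|Z_L-Y\|_F\le\epsilon_1$ and $\|Z_{L_1}\|_{\mathrm{op}}\le\|X\|_{\mathrm{op}} r^{L_1}$, a multiplicative singular-value perturbation bound yields \eqref{eq:NC2}. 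Averaging the master estimate column-wise gives $\|\bar Z_{L-1}-W_L^\dagger\|_F\le s_K(Y)^{-1}\Psi$ (using $\|C\|_{\mathrm{op}}=\sqrt{N/K}$), after which Weyl's inequality on $s_1(\bar Z_{L-1})$ and $s_K(\bar Z_{L-1})$ delivers \eqref{eq:NC2*}. For NC3, I apply the polarization identity to each pair $(a,b)=(z_{ci}^{L-1}, W_{L,c:}^\top)$ and sum: the inner-product total becomes $2\,\mathrm{tr}(Y^\top Z_L)$ and is bounded below via $\|Y\|_F=\sqrt{N}$ and $\|Z_L-Y\|_F\le\epsilon_1$, a Kantorovich-type inequality gives $\|W_L^\dagger e_c\|\|W_L^\top e_c\|\le\kappa(W_L)$ and hence $\|z_{ci}^{L-1}\|\|W_{L,c:}\|\le\kappa(W_L)(1+\epsilon_1)$ by the master estimate, while the SVD comparison of $W_L^\top$ and $W_L^\dagger$ produces the correction $\|Z_{L-1}-W_L^\top Y\|_F \le r\Psi+\sqrt{K}(\kappa(W_L)^2-1)$; assembling these gives the numerator and denominator of \eqref{eq:NC3}.

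The main obstacle is the orthogonal bound in the master estimate. The clean step $\|(I-P_L) W_{L-1}\|_{\mathrm{op}}\le\sqrt{\epsilon_2}$ is possible only because $(I-P_L)W_L^\top W_L(I-P_L)$ is exactly zero, so the $\epsilon_2$ balancedness perturbation converts to a $\sqrt{\epsilon_2}$ operator-norm bound via $\|AA^\top\|_{\mathrm{op}}=\|A\|_{\mathrm{op}}^2$; the $\sqrt{n_{L-1}}$ factor then comes solely from the rank of $(I-P_L)$. A secondary challenge is the NC3 bookkeeping: the $\sqrt{K}(\kappa^2-1)$ correction must be derived by matching $W_L^\top$ against $W_L^\dagger$ in their common SVD basis, and the polarization identity must be tracked carefully to deliver exactly the $2N\kappa(W_L)(1+\epsilon_1)$ denominator.
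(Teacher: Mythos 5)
Your proposal is correct and follows essentially the same route as the paper's proof: the master estimate $\|Z_{L-1}-W_L^+Y\|_F\le\Psi$ is derived from the identical split $Z_{L-1}=P_LZ_{L-1}+(I-P_L)W_{L-1}Z_{L-2}$, the $\kappa(W_L)$ bound from the same telescoping lemma on $(W_LW_L^\top)^{L_2}-W_{L:L_1+1}W_{L:L_1+1}^\top$, NC2 from the same Weyl perturbation applied to $\bar Z_{L-1}$ versus $W_L^+$, and NC3 from the same polarization identity together with the $W_L^\top\leftrightarrow W_L^+$ singular-value comparison. The only cosmetic difference is that you control the orthogonal part via $\|(I-P_L)W_{L-1}\|_{op}\le\sqrt{\epsilon_2}$ followed by a rank bound, whereas the paper bounds $\mathrm{tr}\bigl((I-P_L)(W_{L-1}W_{L-1}^\top-W_L^\top W_L)\bigr)$ by trace--operator duality -- both give the same $r\sqrt{n_{L-1}\epsilon_2}$.
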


The restrictions $\epsilon_1\le s_K(Y)$ and 
$\epsilon_1 \le \sqrt{\frac{(K-1)N}{4K}}$ are mild, and we are interested in the regime in which $\epsilon_1$ is small. In words, \eqref{eq:NC1} shows that, when  $\epsilon_1, \epsilon_2\approx 0$ (i.e., the network approximately interpolates the data in a balanced way), $\Psi(\epsilon_1, \epsilon_2, r)\approx 0$ and the within-class variability (which captures NC1) vanishes.
If in addition the depth of the linear part of the network grows, the RHS of \eqref{eq:NC2} approaches 1, i.e., the last weight matrix $W_L$ is close to orthogonal. This implies that \emph{(i)} $Z_{L-1}$ is also close to orthogonal (which captures NC2), and \emph{(ii)}
the weights in the last layer align with $Z_{L-1}$ (which captures NC3). In fact, when 
$\Psi(\epsilon_1, \epsilon_2, r)\approx 0$ and $\kappa(W_L)\approx 1$, the RHS of both \eqref{eq:NC2*} and \eqref{eq:NC3} is close to $1$. 
Below we give a proof sketch deferring the complete argument to Appendix \ref{app:deferred}.


\paragraph{Proof sketch.} We start with NC1. If $Z_L$ is already well-collapsed (which is guaranteed by the approximate interpolation) and $W_L$ is well-conditioned, then the only source of within-class variability in $Z_{L-1}$ is within the null space of $W_L.$ However, if $W_L$ and $W_{L-1}$ are balanced, the image of $Z_{L-1}$ must be approximately in a subspace of the row space of $W_L$ and, hence, $Z_{L-1}$ has little freedom within the kernel of $W_L$. More formally, consider first the case of perfect balancedness (i.e., $\epsilon_2=0$), and denote by $W_L^+$ the pseudo-inverse of $W_L$. Then, $\text{Im}(Z_{L-1}) \subset \text{Im}(W_L^\top)$ and 
\begin{equation}\label{eq:normW}    
\norm{W_L^+W_LZ_{L-1}-W_L^{+}Y}_F=\norm{Z_{L-1}-W_L^{+}Y}_F\le \frac{\epsilon_1}{s_K(W_L)}.
\end{equation}
As $s_K(W_L)$ can be lower bounded by using the assumptions on approximate interpolation and boundedness of representations, the RHS of \eqref{eq:normW} is small and, therefore, $Z_{L-1}$ is close to a matrix with zero within-class variability. Moving to the case  $\epsilon_2\neq 0$, we need to show that $W_L^{+}W_LZ_{L-1}$ is close to $Z_{L-1}.$ As $W_L^{+}W_L$ projects onto the row-space of $W_L,$ only the part of $Z_{L-1}$ in the kernel of $W_L$ has to be considered. This part is controlled after writing $Z_{L-1}=W_{L-1}Z_{L-2}$, using the boundedness of $Z_{L-2}$ and the approximate balancedness between $W_L$ and $W_{L-1}.$ Finally, as $Z_{L-1}$ is close to $ W_L^+Y,$ a direct computation yields the bound on NC1. 

Next, to bound $\kappa(W_L),$ we notice that $(W_LW_L^\top )^{L_2}-(W_{L:L_1+1}W_{L:L_1+1}^\top)$ has small operator norm, since the weights of linear layers are approximately balanced. This allows to upper bound $\kappa(W_L)^{2L_2}$ in terms of $\kappa(W_{L:L_1+1})$ (plus a small perturbation), which gives \eqref{eq:NC2}. 

To lower bound the NC3 metric, we rescale $Z_{L-1}$ and $W_L$ to $Z'_{L-1}$ and $W'_L$, so that their columns and rows, respectively, have roughly equal size. Then, we reformulate the problem to proving that $\left\langle Z'_{L-1}, W'_LY\right\rangle$ is close to its theoretical maximum. We proceed to show this by arguing that, in this scaling and given that $W_L$ is sufficiently well-conditioned, $W'_L$ can be replaced by $(W'_L)^+$. As $Z_{L-1}$ is close to $ W_L^+Y$, we obtain~\eqref{eq:NC3}. Finally, the bound on NC2 in \eqref{eq:NC2*} follows by combining \eqref{eq:NC2} with the closeness between $Z_{L-1}$ and $W_L^+Y$ already obtained in the proof of NC1. 

\section{Gradient Descent Leads to No Within-Class Variability (NC1)} \label{sec:gd_guarantee}

In this section, we show that NC1 holds for a class of neural networks with one wide layer followed by a pyramidal topology, as considered in \cite{QuynhMarco2020}. To do so, we show that the balancedness and interpolation conditions of Theorem \ref{th:NC1} holds. We expect that similar conditions -- and, therefore, NC1 -- also hold under minimal over-parameterization and for ReLU networks, adapting e.g.\ the approach of \cite{bombari2022memorization} and \cite{ZouGu2019}, respectively. 

We consider a neural network as in \eqref{eq:F_l}, and we minimize the $\lambda$-regularized square loss
$    C_\lambda(\theta) = \frac{1}{2}\norm{z_{L}(\theta)-y}_2^2+\frac{\lambda}{2}\norm{\theta}_2^2$, where $z_{L}$ and $y$ are obtained by vectorizing $Z_{L}$ and $Y$, respectively, and $\theta$ collects all the parameters of the network. To do so, we consider the gradient descent (GD) update $\theta_{k+1}=\theta_k - \eta\nabla C_\lambda(\theta_k)$, where $\eta$ is the step size and $\theta_k=(W_\ell^k)_{\ell=1}^L$ contains all parameters at step $k.$ We also denote by $Z_\ell^k$ the output of layer $\ell$ after $k$ steps of GD. 
We make the following assumption on the pyramidal topology of the network, noting that this requirement is also common in prior work on the loss landscape \citep{QuynhICML2017,QuynhICML2018}. 

\begin{assumption}(Pyramidal network topology)\label{ass:net_topo}
    Let $n_1\geq N$ and $n_{2}\geq n_{3}\geq\ldots\geq n_{L}.$
\end{assumption}

We make the following assumptions on the activation function $\sigma$ of the non-linear
layers.

\begin{assumption}(Activation function)\label{ass:act} 
    Fix $\gamma\in (0, 1)$ and $\beta \ge 1$. 
    Let $\sigma$ satisfy that:
    (i) $\sigma'(x)\in[\gamma,1]$, (ii) $|\sigma(x)|\leq|x|$ for every $x\in\RR$, and (iii) $\sigma'$ is $\beta$-Lipschitz. 
\end{assumption}
In principle, $\sigma$ can change at all layers, as long as it satisfies the above assumption. As an example, one can consider a family of parameterized ReLU functions, smoothened by a Gaussian kernel:
\begin{align}\label{eq:smooth_lrelu}
    \sigma(x) = -\frac{(1-\gamma)^2}{2\pi\beta}\hspace{-.1em} + \hspace{-.1em}\frac{\beta}{1-\gamma}\hspace{-.1em}\int_{-\infty}^{\infty}\hspace{-1.25em} \max(\gamma u, u)\, e^{-\frac{\pi\beta^2(x-u)^2}{(1-\gamma)^2}} du.
\end{align}
One can readily verify that the activation in \eqref{eq:smooth_lrelu} satisfies Assumption \ref{ass:act} and it uniformly approximates the ReLU function over $\mathbb R$, see Lemma B.1 in \cite{QuynhMarco2020}.
Next, let us introduce some notation\footnote{To avoid confusion, we note that this notation is different from the one used in \citep{QuynhMarco2020}.} for the singular values of the weight matrices at initialization $\theta_0=(W_\ell^0)_{\ell=1}^{L}$:
\begin{equation}\label{eq:notation_init}
\begin{split}
&\lambda_\ell=\svmin{W_\ell^0},\,\,\,\,\bar{\lambda}_\ell=\norm{W_\ell^0}_{op}+\min_{\ell\in \{3, \ldots, L\}}\lambda_\ell,\,\,\,\, 
 \lambda_{i\to j} = \prod_{\ell=i}^j \lambda_\ell, \,\,\,\,\bar{\lambda}_{i\to j} = \prod_{\ell=i}^j\bar{\lambda}_\ell.\\
\end{split}
\end{equation}
We also define $\lambda_F=\svmin{\sigma(W_1^0X)}$ as the smallest singular value of the output of the first hidden layer at initialization. 
Finally, we make the following assumption on the initialization. 
\begin{assumption}(Initial conditions)\label{ass:init} 
\begin{align}\label{eq:assinit}
\lambda_F \lambda_{3\to L}\min(\lambda_F, \min_{\ell\in \{3, \ldots, L\}} \lambda_\ell)\ge 8\gamma\sqrt{\left(\frac{2}{\gamma}\right)^L C_0(\theta_0)}.
\end{align}
\end{assumption}
We note that \eqref{eq:assinit} can be satisfied by choosing a sufficiently small initialization for the second layer and a sufficiently large one for the remaining layers. In fact, the LHS of \eqref{eq:assinit} depends on all the layer weights except the second, so this quantity can be made arbitrarily large. Next, by taking a sufficiently small second layer, the term $\sqrt{2C_0(\theta)}=\|Z_L-Y\|_F$ can be upper bounded by $2\|Y\|_F$. As a consequence, the RHS of \eqref{eq:assinit} is at most $8\sqrt{2}\|Y\|_F\gamma\big(\frac{2}{\gamma}\big)^{L/2}$. As the LHS of \eqref{eq:assinit} can be arbitrarily large, the inequality holds for a suitable initialization. 

 
\begin{restatable}{theorem}{mainGD}\label{thm:main}
    Let the network satisfy Assumption \ref{ass:net_topo}, 
    $\sigma$ satisfy Assumption \ref{ass:act} 
    and the initial conditions satisfy Assumption \ref{ass:init}. 
Fix $0<\epsilon_1\le \frac{1}{2}\sqrt{\frac{(K-1)N}{K}}$, $\epsilon_2>0$, let $b\ge 1$ be s.t.\ $\|X_{:i}\|_2\le b$ for all $i$, and run $k$ steps of $\lambda$-regularized GD with step size $\eta$, where
\begin{equation}\label{eq:ublambdaeta}
    \begin{split}
\lambda &\leq \min \left( 2\left(\frac{\gamma}{2}\right)^{L-2}\lambda_F\lambda_{3\to L}, \frac{2C_0(\theta_0)}{\norm{\theta_0}_2^2}, \frac{\epsilon_1^2}{18(\norm{\theta_0}_2+\lambda_F/2)^2} \right),\\
\eta&\leq\min\left( \frac{1}{2\beta_1},\frac{1}{5N\beta b^{3}\max\left(1, \left(\frac{2\epsilon_1^2}{\lambda}\right)^{3L/2}\right)L^{5/2}},\frac{1}{2\lambda},\left(\frac{\lambda}{2\epsilon_1^2}\right)^{L_1+L}\frac{\epsilon_2}{4\|X\|_{op}^2}\right),\\
k&\ge \left\lceil \frac{\log\frac{\lambda m_{\lambda}}{C_{\lambda}(\theta_{0})-\lambda m_{\lambda}}}{\log(1-\eta\frac{\alpha}{8})}\right\rceil+\left\lceil \frac{\log\frac{\lambda\epsilon_2}{4\epsilon_1^2}}{\log(1-\eta\lambda)}\right\rceil,
    \end{split}
\end{equation}
with $\beta_1 = 5N\beta b^{3}\left(\prod_{\ell=1}^L\max(1, \bar\lambda_\ell)\right)^3 L^{5/2}$, $m_{\lambda}=(1+\sqrt{4\lambda/\alpha})^{2}\left(\left\Vert \theta_{0}\right\Vert_2 +r_0\right)^{2}$, $r_0=\frac{1}{2}\min(\lambda_F, \min_{\ell\in \{3, \ldots, L\}} \lambda_\ell)$, and $\alpha=2^{-(L-3)}\gamma^{L-2}\lambda_F\lambda_{3\to L}$. 
Then, we have that 
\begin{equation}\label{eq:NC1GD}
\text{NC1}(Z_{L-1}^k)\le \frac{r^2}{N}\frac{\Psi(\epsilon_1\sqrt{2}, \epsilon_2, r)}{\left(\sqrt{\frac{K-1}{K}}-\frac{2\sqrt{2}}{\sqrt{N}}\epsilon_1\right)^2},
\end{equation}
with $\Psi(\epsilon_1, \epsilon_2, r)=r\left(\frac{\epsilon_{1}}{s_K(Y)-\epsilon_1}+\sqrt{n_{L-1} \epsilon_2}\right)$ and
\begin{equation}\label{eq:defr}
r=\max\left(\epsilon_1\sqrt{\frac{2}{\lambda}}, \left(\epsilon_1\sqrt{\frac{2}{\lambda}}\right)^{L-2}\|X\|_{op}, \left(\epsilon_1\sqrt{\frac{2}{\lambda}}\right)^{L-1}\|X\|_{op}\right).
\end{equation} 
\end{restatable}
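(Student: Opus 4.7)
The plan is to verify the three hypotheses of Theorem~\ref{th:NC1} along the GD trajectory at step $k$: (i) approximate interpolation $\|Z_{L}^{k}-Y\|_{F}\le\epsilon_{1}\sqrt{2}$, (ii) approximate balancedness $\|(W_{\ell+1}^{k})^{\top}W_{\ell+1}^{k}-W_{\ell}^{k}(W_{\ell}^{k})^{\top}\|_{op}\le\epsilon_{2}$ for every linear layer $\ell\in\{L_{1}+1,\dots,L-1\}$, and (iii) a uniform bound $r$ on $\|W_{\ell}^{k}\|_{op}$ and on $\|Z_{L-2}^{k}\|_{op},\|Z_{L-1}^{k}\|_{op}$. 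Plugging these into \eqref{eq:NC1} then gives \eqref{eq:NC1GD}.

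\textbf{Phase 1: loss convergence and boundedness.} Adapting the approach of \cite{QuynhMarco2020}, one establishes a local Polyak--{\L}ojasiewicz (PL) inequality for the unregularized loss $C_{0}$ on the ball of radius $r_{0}$ around $\theta_{0}$, where the smallest singular value of the Jacobian of $\theta\mapsto Z_{L}(\theta)$ stays at least $\sqrt{\alpha}$. This exploits Assumption~\ref{ass:net_topo} (pyramidal topology), the lower bound $\sigma'\ge\gamma$ in Assumption~\ref{ass:act}, and Assumption~\ref{ass:init} (initialization). Combining the PL bound with smoothness of $C_{\lambda}$ (controlled by $\eta\le 1/(2\beta_{1})$), a descent-lemma argument on the regularized loss yields linear convergence
\begin{equation*}
C_{\lambda}(\theta_{k})-\lambda m_{\lambda}\;\le\;(1-\eta\alpha/8)^{k}\,\bigl(C_{\lambda}(\theta_{0})-\lambda m_{\lambda}\bigr).
\end{equation*}
After the first $k_{1}:=\lceil\log(\lambda m_{\lambda}/(C_{\lambda}(\theta_{0})-\lambda m_{\lambda}))/\log(1-\eta\alpha/8)\rceil$ steps we have $C_{\lambda}(\theta_{k_{1}})\le 2\lambda m_{\lambda}$. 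The choice $\lambda\le\epsilon_{1}^{2}/(18(\|\theta_{0}\|_{2}+\lambda_{F}/2)^{2})$ converts this into $\|Z_{L}^{k_{1}}-Y\|_{F}\le\epsilon_{1}\sqrt{2}$, and via $\tfrac{\lambda}{2}\|\theta_{k_{1}}\|_{2}^{2}\le C_{\lambda}(\theta_{k_{1}})$ into $\|\theta_{k_{1}}\|_{2}\le\epsilon_{1}\sqrt{2/\lambda}$. Since $|\sigma(x)|\le|x|$, one then propagates $\|Z_{\ell}^{k_{1}}\|_{op}\le\prod_{i\le\ell}\|W_{i}^{k_{1}}\|_{op}\cdot\|X\|_{op}$, which matches the definition of $r$ in \eqref{eq:defr}.

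\textbf{Phase 2: balancedness decay.} For linear $\ell$, set $D_{\ell}^{k}:=(W_{\ell+1}^{k})^{\top}W_{\ell+1}^{k}-W_{\ell}^{k}(W_{\ell}^{k})^{\top}$. Expanding $W_{m}^{k+1}=(1-\eta\lambda)W_{m}^{k}-\eta\nabla_{W_{m}}C_{0}$ gives
\begin{equation*}
D_{\ell}^{k+1}=(1-\eta\lambda)^{2}D_{\ell}^{k}-\eta(1-\eta\lambda)\,\Delta_{\ell}^{k}+\eta^{2}E_{\ell}^{k},
\end{equation*}
with $\Delta_{\ell}^{k}$ collecting first-order cross terms and $E_{\ell}^{k}$ the gradient-squared residual. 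A direct computation with the linear-layer backprop identities $\nabla_{W_{\ell+1}}C_{0}=G_{\ell+1}Z_{\ell-1}^{\top}W_{\ell}^{\top}$ and $\nabla_{W_{\ell}}C_{0}=W_{\ell+1}^{\top}G_{\ell+1}Z_{\ell-1}^{\top}$ shows $\Delta_{\ell}^{k}\equiv 0$ (the classical conservation of balancedness under gradient flow on $C_{0}$). Hence $\|D_{\ell}^{k+1}\|_{op}\le(1-\eta\lambda)^{2}\|D_{\ell}^{k}\|_{op}+\eta^{2}\|E_{\ell}^{k}\|_{op}$, where $\|E_{\ell}^{k}\|_{op}$ is controlled by the Phase-1 bounds on $\|W_{\ell}^{k}\|_{op}$, $\|Z_{\ell}^{k}\|_{op}$, and the residual $\|Z_{L}^{k}-Y\|_{F}$. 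The step-size bound $\eta\le(\lambda/(2\epsilon_{1}^{2}))^{L_{1}+L}\,\epsilon_{2}/(4\|X\|_{op}^{2})$ is chosen so that the $\eta^{2}\|E_{\ell}^{k}\|_{op}$ term is dominated by $\eta\lambda\epsilon_{2}$; iterating for $k_{2}:=\lceil\log(\lambda\epsilon_{2}/(4\epsilon_{1}^{2}))/\log(1-\eta\lambda)\rceil$ additional steps then yields $\|D_{\ell}^{k_{1}+k_{2}}\|_{op}\le\epsilon_{2}$.

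\textbf{Wrap-up and main difficulty.} Taking $k=k_{1}+k_{2}$, the three hypotheses of Theorem~\ref{th:NC1} are simultaneously satisfied with parameters $(\epsilon_{1}\sqrt{2},\epsilon_{2},r)$, giving \eqref{eq:NC1GD}. The principal obstacle is the interaction between the two phases: during the $k_{2}$ balancedness-decay steps one must still guarantee that $\theta_{k}$ remains in the PL region and that $C_{\lambda}(\theta_{k})\le 2\lambda m_{\lambda}$, even though the per-step gradient norm may be large because $r=\mathcal{O}((\epsilon_{1}/\sqrt{\lambda})^{L-1})$ is exponential in depth. This forces the exponentially-small-in-$L$ step-size constraint in \eqref{eq:ublambdaeta}, and verifying that it suffices requires a careful joint induction that simultaneously tracks the loss, the weight norms, and the balancedness violation at every iterate.
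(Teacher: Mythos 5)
Your proposal follows essentially the same two-phase strategy as the paper's proof: a Polyak--\L{}ojasiewicz argument converts into a shifted PL for $C_\lambda$ (Proposition~\ref{prop:PL}) giving linear convergence to $C_\lambda\le 2\lambda m_\lambda\le\epsilon_1^2$, and then the discrete-time balancedness recursion $D_\ell^{k+1}=(1-\eta\lambda)^2 D_\ell^k+\eta^2 E_\ell^k$ (with the first-order cross terms vanishing by the backprop commutation identity, exactly as you state) is iterated for $k_2$ further steps to push $\|D_\ell\|_{op}$ below $\epsilon_2$, after which Theorem~\ref{th:NC1} is invoked with $(\epsilon_1\sqrt{2},\epsilon_2,r)$. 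Both the $T_\ell^k(W_\ell^k)^\top=(W_{\ell+1}^k)^\top T_{\ell+1}^k$ cancellation and the casework $\|D_\ell^k\|_{op}\gtrless\epsilon_2$ are what the paper uses. One small mischaracterization in your ``wrap-up'' paragraph: during Phase~2 the paper does not need to keep $\theta_k$ inside the PL region. It only needs the loss to be non-increasing, which follows from the descent lemma under the Lipschitz bound $\eta\le 1/\beta_2$ with $\beta_2$ computed from the a priori norm bound $\|\theta_k\|_2\le\epsilon_1\sqrt{2/\lambda}$ (itself a consequence of $C_\lambda\le\epsilon_1^2$); the ``joint induction'' you anticipate is thus a self-consistent loop between loss level, parameter norm, and Lipschitz constant, not a PL-region tracking argument. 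The exponential-in-$L$ step-size constraint does arise for exactly the reason you identify.
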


In words, if regularization and learning rate are small enough and we run GD for sufficiently long (as in \eqref{eq:ublambdaeta}), then the within-class variability vanishes (as in \eqref{eq:NC1GD}). To interpret the result, note that the NC1 metric tends to $0$ (i.e., the collapse is perfect) as $\epsilon_1, \epsilon_2\to 0$. Since the terms $\beta_1, m_\lambda, r_0, \alpha$ do not depend on $\epsilon_1, \epsilon_2$ (but just on the network architecture and initialization), taking $\lambda$ of order $\epsilon_1^2$ and $\eta$ of order $\epsilon_2$ satisfies the first two requirements in \eqref{eq:ublambdaeta}, also giving that $r$ in \eqref{eq:defr} is of constant order. Finally, as $\log(1-x)\approx -x$ for small $x$, the quantity $\eta k$ -- which quantifies the time of the dynamics, since it is the product of learning rate and number of GD steps -- is of order $\log(1/\lambda)+\log(1/\epsilon_2)/\lambda$. Below we provide a proof sketch deferring the full argument to Appendix \ref{app:deferred}.

\paragraph{Proof sketch.} We show that the network trained via $\lambda$-regularized GD fulfills the three sufficient conditions for NC1 given by Theorem \ref{th:NC1}, i.e., approximate interpolation, approximate balancedness and bounded representations/weights. To do so, we distinguish two phases in the training dynamics.

The \emph{first phase} lasts for logarithmic time in $1/\lambda$ (or, equivalently, $1/\epsilon_1$) and, here, the loss decreases exponentially fast to a value of at most $2\lambda m_\lambda\le \epsilon_1^2$. As the learning rate is small enough, the loss cannot increase during the GD dynamics, which already gives approximate interpolation. To show the exponential convergence, we proceed in two steps. First, Lemma 4.1 in \citep{QuynhMarco2020} gives that the unregularized loss $C_0(\theta)$ satisfies the Polyak-Lojasiewicz (PL) inequality
\begin{equation}\label{eq:PL}   
\left\Vert \nabla C_0(\theta)\right\Vert_2 ^{2}\geq\frac{\alpha}{2}C_0(\theta),
\end{equation}
for all $\theta$ in a ball centered at initialization $\theta_0$ and with sufficiently large radius (captured by $r_0$). Next, we show that, if $C_0(\theta)$ satisfies the $\alpha$-PL inequality in \eqref{eq:PL}, then the regularized loss $C_{\lambda}(\theta)=C_0(\theta)+\frac{\lambda}{2}\left\Vert \theta\right\Vert_2^{2}$
satisfies a 
shifted $\alpha$-PL inequality, which implies exponential convergence. This second step is formalized by the proposition below proved in Appendix \ref{app:deferred}.

\begin{restatable}{prop}{PL}\label{prop:PL}
Let $C_0(\theta)$ satisfy the $\alpha$-PL inequality \eqref{eq:PL} in the ball $B(\theta_{0},r_0)$.
Then, in the same ball, $C_{\lambda}(\theta)$ satisfies the inequality
\begin{equation}\label{eq:shiftedPL}    
\left\Vert \nabla C_{\lambda}(\theta))\right\Vert_2 ^{2}\geq\frac{\alpha}{4}\left(C_{\lambda}(\theta)-\lambda m_{\lambda}\right),
\end{equation}
where $m_{\lambda}=(1+\sqrt{4\lambda/\alpha})^{2}\left(\left\Vert \theta_{0}\right\Vert_2 +r_0\right)^{2}$.
Furthermore, assume that $r_0\geq 8\sqrt{C_{\lambda}(\theta_{0})/\alpha}$ and $\nabla C_0(\theta)$ is $\beta_1$-Lipschitz in $B(\theta_{0}, r_0)$. Then, for any $\eta<1/(2\beta_1)$, there exists
\begin{equation}    \label{eq:Tbound}
k_1\leq\left\lceil \frac{\log\frac{\lambda m_{\lambda}}{C_{\lambda}(\theta_{0})-\lambda m_{\lambda}}}{\log(1-\eta\frac{\alpha}{8})}\right\rceil
\end{equation}
such that the $k_1$-th iterate of GD satisfies
\begin{equation}\label{eq:GDloss}
 C_{\lambda}(\theta_{k_1})\leq2\lambda m_{\lambda},\qquad\qquad  \left\Vert \theta_{k_1}-\theta_{0}\right\Vert_2 \leq 8\sqrt{\frac{C_{\lambda}(\theta_{0})}{\alpha}}\le r_0. 
\end{equation}
\end{restatable}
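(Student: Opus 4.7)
The plan is to derive the shifted PL inequality \eqref{eq:shiftedPL} algebraically from the $\alpha$-PL of $C_{0}$, and then to couple it with the descent lemma to obtain both the exponential convergence rate and the containment of the iterates in $B(\theta_{0},r_{0})$.

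For \eqref{eq:shiftedPL}, I would first apply the weighted Young-type bound $\|a+b\|_{2}^{2}\geq\tfrac{1}{2}\|a\|_{2}^{2}-\|b\|_{2}^{2}$ with $a=\nabla C_{0}(\theta)$ and $b=\lambda\theta$, which together with the $\alpha$-PL of $C_{0}$ yields $\|\nabla C_{\lambda}(\theta)\|_{2}^{2}\geq\tfrac{\alpha}{4}C_{0}(\theta)-\lambda^{2}\|\theta\|_{2}^{2}$. Substituting $C_{0}=C_{\lambda}-\tfrac{\lambda}{2}\|\theta\|_{2}^{2}$ and factoring out $\tfrac{\alpha}{4}$ gives $\|\nabla C_{\lambda}\|_{2}^{2}\geq\tfrac{\alpha}{4}\left(C_{\lambda}-\tfrac{\lambda}{2}(1+8\lambda/\alpha)\|\theta\|_{2}^{2}\right)$. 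For $\theta\in B(\theta_{0},r_{0})$ we have $\|\theta\|_{2}\leq\|\theta_{0}\|_{2}+r_{0}$, and a direct algebraic check gives $\tfrac{1}{2}(1+8\lambda/\alpha)\leq(1+2\sqrt{\lambda/\alpha})^{2}=(1+\sqrt{4\lambda/\alpha})^{2}$, so \eqref{eq:shiftedPL} follows with the stated $m_{\lambda}$.

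For the convergence rate, the descent lemma applied to the $(\beta_{1}+\lambda)$-Lipschitz gradient of $C_{\lambda}$ gives $C_{\lambda}(\theta_{k+1})\leq C_{\lambda}(\theta_{k})-\tfrac{\eta}{2}\|\nabla C_{\lambda}(\theta_{k})\|_{2}^{2}$, provided $(\beta_{1}+\lambda)\eta\leq 1$. Combined with the shifted PL this yields the linear recursion $g_{k+1}\leq(1-\eta\alpha/8)g_{k}$ where $g_{k}:=C_{\lambda}(\theta_{k})-\lambda m_{\lambda}$, and solving for the smallest $k_{1}$ with $g_{k_{1}}\leq\lambda m_{\lambda}$ gives \eqref{eq:Tbound} together with the loss bound in \eqref{eq:GDloss}. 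For the path-length bound, I would use the telescoping identity $\sqrt{g_{k}}-\sqrt{g_{k+1}}=(g_{k}-g_{k+1})/(\sqrt{g_{k}}+\sqrt{g_{k+1}})\geq \tfrac{\eta\|\nabla C_{\lambda}(\theta_{k})\|_{2}^{2}}{4\sqrt{g_{k}}}$, where the last inequality uses descent and $\sqrt{g_{k}}+\sqrt{g_{k+1}}\leq 2\sqrt{g_{k}}$, and then plug in $\|\nabla C_{\lambda}(\theta_{k})\|_{2}/\sqrt{g_{k}}\geq\sqrt{\alpha}/2$ from the shifted PL to obtain $\sqrt{g_{k}}-\sqrt{g_{k+1}}\geq\tfrac{\eta\sqrt{\alpha}}{8}\|\nabla C_{\lambda}(\theta_{k})\|_{2}$. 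Telescoping this across $k$ yields $\|\theta_{k_{1}}-\theta_{0}\|_{2}\leq\eta\sum_{j}\|\nabla C_{\lambda}(\theta_{j})\|_{2}\leq(8/\sqrt{\alpha})\sqrt{g_{0}}\leq 8\sqrt{C_{\lambda}(\theta_{0})/\alpha}\leq r_{0}$.

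The main obstacle is the circularity between the ball containment and the applicability of the two lemmas: the shifted PL holds only in $B(\theta_{0},r_{0})$, and the descent step needs the Lipschitz bound on $\nabla C_{0}$, again only valid there. I would close the loop by induction on $k$: assuming $\theta_{0},\ldots,\theta_{k}\in B(\theta_{0},r_{0})$, both lemmas apply at each prior iterate, the telescoping bound above gives $\|\theta_{k+1}-\theta_{0}\|_{2}\leq 8\sqrt{C_{\lambda}(\theta_{0})/\alpha}\leq r_{0}$ by the hypothesis $r_{0}\geq 8\sqrt{C_{\lambda}(\theta_{0})/\alpha}$, and the induction closes. A secondary technicality is that the descent step uses the Lipschitz constant $\beta_{1}+\lambda$ of $\nabla C_{\lambda}$ rather than just $\beta_{1}$, so the displayed hypothesis $\eta<1/(2\beta_{1})$ must be complemented by a smallness condition on $\eta\lambda$ (automatically supplied by $\eta\leq 1/(2\lambda)$ in Theorem~\ref{thm:main}) to retain the clean factor $\tfrac{1}{2}$ in the descent inequality.
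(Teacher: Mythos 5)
Your proposal is correct and mirrors the structure of the paper's own argument: a shifted PL inequality derived algebraically, a descent-lemma recursion, and a telescoping of $\sqrt{g_k}-\sqrt{g_{k+1}}$ for the path length. Two details in your version are worth highlighting. First, for the shifted PL you apply the quadratic Young inequality $\|a+b\|_2^2\geq\tfrac12\|a\|_2^2-\|b\|_2^2$ at the outset, which lands directly on $\|\nabla C_\lambda\|_2^2\geq\tfrac{\alpha}{4}C_0-\lambda^2\|\theta\|_2^2$; the paper instead keeps the bound in the form $(\sqrt{\cdot}-\lambda\|\theta\|_2)^2$ for several steps, which requires a $\sqrt{a-b}\geq\sqrt a-\sqrt b$ step and an implicit case split on whether $\sqrt{\alpha C_0/2}\geq\lambda\|\theta\|_2$ (handled by noting the claim is trivial when $C_\lambda<\lambda m_\lambda$). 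Your ordering avoids that case distinction and, via the check $\tfrac12(1+8\lambda/\alpha)\leq(1+\sqrt{4\lambda/\alpha})^2$, recovers the same $m_\lambda$, so this is a modest simplification rather than a different argument. Second, you correctly flag that the descent step uses a Lipschitz bound on $\nabla C_\lambda=\nabla C_0+\lambda\theta$, whose modulus is $\beta_1+\lambda$, whereas the hypothesis only posits $\beta_1$-Lipschitzness of $\nabla C_0$ and the paper's proof writes $\beta_1$ at that step. As you note, this is closed by the regime of interest (the driving theorem separately imposes $\eta\leq 1/(2\lambda)$), but it is a genuine looseness in the paper's displayed proof that your write-up handles more carefully; you also make the ball-containment induction explicit, which the paper leaves implicit in the phrase ``as long as $\theta_j\in B(\theta_0,r_0)$.''
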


The \emph{second phase} lasts for linear time in $1/\lambda$ (or, equivalently in $1/\epsilon_1^2$) and logarithmic time in $1/\epsilon_2$ and, here, the weight matrices in the linear part of the network become balanced. More precisely, we show that, if $W_{\ell}$ is a weight matrix of the linear part, $\|W_{\ell+1}^\top W_{\ell+1}
- W_{\ell} W_{\ell}^\top\|_{op}$ decreases exponentially and the exponent scales with $1/\lambda$, which gives approximate balancedness.
Finally, as the regularization term in the loss is at most $\epsilon_1^2$, the operator norm of representations and weight matrices is bounded by $r$ as in \eqref{eq:defr}, and the proof is completed by an application of Theorem \ref{th:NC1}. 



\section{Orthogonality of Class Means (NC2) and Alignment with Last Weight Matrix (NC3)}

To guarantee the orthogonality of class means and their alignment with the last weight matrix, the crux is to show that the condition number $\kappa(W_L)$ of the last weight matrix $W_L$ is close to one. In fact, as $Z_{L-1} \approx W_L^+ Y$, this implies that the last hidden representation $Z_{L-1}$ is approximately a rotation and rescaling of $Y$, which gives NC2, and a bound on NC3 of the form in \eqref{eq:NC3} also follows. 

The fact that $\kappa(W_L)\approx 1$ is a consequence of the presence of many balanced linear layers. 
Indeed, balancedness implies
$W_L W_L^\top = (W_{L:L_1+1}W_{L:L_1+1}^\top)^\frac{1}{L_2}$,
which gives that 
$\kappa(W_L) = \kappa(W_{L:L_1+1})^\frac{1}{L_2}$.
Thus, if the conditioning of the product of the linear layers $W_{L:L_1+1}$ can be bounded independently of $L_2$, one can guarantee that the conditioning of $W_L$ approaches $1$ as $L_2\to\infty$.
As this is difficult to obtain in full generality (in particular the assumptions of Theorem \ref{th:NC1} may not be sufficient), we show that the conditioning can be controlled \emph{(i)} at any global minimizer, and \emph{(ii)} when the parameters are `stable' under large learning rates. 

\subsection{Global Minimizers}
We first show that any set of parameters that approximately interpolate with small norm has bounded condition number. We will then show that with the right choice of widths and ridge, all global minimizers satisfy these two assumptions. 

\begin{restatable}{prop}{goodlossimpliesnctwo}
\label{prop:conditioning_from_interp+param_norm}Let $\sigma$ satisfy Assumption \ref{ass:act}. Then, for any network that satisfies
\begin{itemize}[leftmargin=6mm]
\item approximate interpolation, i.e., $\left\Vert Z_L-Y\right\Vert _{F}\leq\epsilon_{1}$,
\item bounded parameters, i.e., $\left\Vert \theta\right\Vert_2^{2}\leq LK+c$,
\end{itemize}
the linear part $W_{L:L_{1}+1}$ satisfies
\begin{equation}    
\kappa(W_{L:L_{1}+1})\leq\exp\left(\frac{1}{2}\left(c+L_{1}K\log K-2K\log\frac{s_K(Y)-\epsilon_{1}}{\left\Vert X\right\Vert _{op}}\right)\right).
\end{equation}
\end{restatable}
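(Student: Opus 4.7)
The plan is to bound $\log\kappa(W_{L:L_{1}+1})$ by controlling the largest and smallest singular values of $M:=W_{L:L_{1}+1}$ separately, then exploiting the parameter-norm budget through Jensen's inequality. Using the trivial inequality $\prod_{i=1}^{K}s_{i}(M)\ge s_{1}(M)\,s_{K}(M)^{K-1}$, one immediately has
\[
\kappa(M)\le\frac{\det(MM^{\top})^{1/2}}{s_{K}(M)^{K}},
\]
so it suffices to upper bound $\det(MM^{\top})^{1/2}$ and lower bound $s_{K}(M)$.

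For the determinant, I would apply Horn's multiplicative singular-value inequality $\prod_{i=1}^{K}s_{i}(AB)\le\prod_{i=1}^{K}s_{i}(A)s_{i}(B)$ recursively to $M=W_{L}\cdots W_{L_{1}+1}$ to obtain $\prod_{i=1}^{K}s_{i}(M)\le\prod_{\ell=L_{1}+1}^{L}\prod_{i=1}^{K}s_{i}(W_{\ell})$, and combine it with AM-GM on the top-$K$ singular values, $\prod_{i=1}^{K}s_{i}(W_{\ell})^{2}\le(\|W_{\ell}\|_{F}^{2}/K)^{K}$. Taking logs yields $\tfrac{1}{2}\log\det(MM^{\top})\le\tfrac{K}{2}\sum_{\ell=L_{1}+1}^{L}\log\|W_{\ell}\|_{F}^{2}-\tfrac{KL_{2}}{2}\log K$. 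For the smallest singular value, the identity $Z_{L}=MZ_{L_{1}}$ and Weyl's inequality $s_{K}(Z_{L})\le s_{K}(M)\|Z_{L_{1}}\|_{op}$ combined with $s_{K}(Z_{L})\ge s_{K}(Y)-\epsilon_{1}$ give $s_{K}(M)\ge(s_{K}(Y)-\epsilon_{1})/\|Z_{L_{1}}\|_{op}$, and Assumption~\ref{ass:act} (which makes $\sigma$ $1$-Lipschitz with $\sigma(0)=0$) allows a per-column contraction argument bounding $\|Z_{L_{1}}\|_{op}\le\|X\|_{op}\prod_{\ell=1}^{L_{1}}\|W_{\ell}\|_{op}$. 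Using $\|W_{\ell}\|_{op}\le\|W_{\ell}\|_{F}$ and combining both estimates:
\[
\log\kappa(M)\le\frac{K}{2}\sum_{\ell=1}^{L}\log\|W_{\ell}\|_{F}^{2}-\frac{KL_{2}}{2}\log K+K\log\frac{\|X\|_{op}}{s_{K}(Y)-\epsilon_{1}}.
\]

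To close, I would apply Jensen's inequality to the log term using $\sum_{\ell}\|W_{\ell}\|_{F}^{2}\le LK+c$: by concavity of $\log$, $\sum_{\ell}\log\|W_{\ell}\|_{F}^{2}\le L\log((LK+c)/L)\le L\log K+c/K$, where the last step uses $\log(1+x)\le x$. Multiplying by $K/2$ yields $\tfrac{KL}{2}\log K+c/2$, which combines with the $-\tfrac{KL_{2}}{2}\log K$ term to leave $\tfrac{KL_{1}}{2}\log K+c/2$, matching the target exactly after rearranging. The main conceptual hurdle is this cancellation $L-L_{2}=L_{1}$: each of the $L_2$ linear layers contributes a $-\log K$ through AM-GM on its top-$K$ singular values, which exactly kills the corresponding Jensen contribution and leaves only the $L_{1}$ nonlinear layers' share. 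A secondary technical point is that $\|\sigma(A)\|_{op}\le\|A\|_{op}$ can fail even for smooth monotone $1$-Lipschitz $\sigma$, so the $\|X\|_{op}$ bound on $\|Z_{L_{1}}\|_{op}$ must come from the column-wise Lipschitz argument together with $\|\cdot\|_{op}\le\|\cdot\|_{F}$, rather than from operator-norm submultiplicativity directly.
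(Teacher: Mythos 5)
Your proof is correct, and it takes a genuinely different route from the paper. The paper splits the parameter budget $\|\theta\|_2^2 = \|\theta_{nonlin}\|_2^2 + \|\theta_{lin}\|_2^2$, lower-bounds $\|\theta_{lin}\|_2^2$ via Theorem~1 of \citet{dai_2021_repres_cost_DLN} (the representation cost of deep linear nets, $(L-L_1)\sum_i s_i(M)^{2/(L-L_1)}$) combined with the elementary bound $x^{2/L_2}\ge 1+\tfrac{2}{L_2}\log x$, lower-bounds $\|\theta_{nonlin}\|_2^2$ via Frobenius submultiplicativity, and then optimizes the resulting expression over the free variable $s_K(M)$ to extract the bound on $\kappa$. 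You instead go after $\kappa(M)$ head-on via the same identity $\kappa\le\prod_i s_i(M)/s_K(M)^K$, but then bound $\prod_i s_i(M)$ by Horn's multiplicative singular-value inequality plus a per-layer AM-GM, and absorb everything into a single Jensen step at the end. What your route buys is self-containedness: Horn $+$ AM-GM $+$ Jensen replaces the external representation-cost theorem entirely, the $s_K$-optimization disappears, and the cancellation $L-L_2=L_1$ of $\log K$ terms is laid bare. What the paper's route buys is modularity: the separation into linear/nonlinear norm contributions and the explicit invocation of Dai's result make it easy to port the argument if the representation-cost formula changes (e.g.\ for groups other than $\mathrm{GL}$ or for other matrix norms). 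Both routes require, implicitly, that all linear layers have width at least $K$, which is forced by $\epsilon_1<s_K(Y)$.

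One small caution on the technical point you flag yourself at the end: the intermediate inequality $\|Z_{L_1}\|_{op}\le\|X\|_{op}\prod_{\ell\le L_1}\|W_\ell\|_{op}$ is false in general (componentwise $1$-Lipschitz activations do not contract operator norm), and the per-column contraction argument gives control of $\|Z_{L_1}\|_F$, not $\|Z_{L_1}\|_{op}$. The clean chain, which is what the paper uses and what your closing sentence correctly points at, is $\|Z_{L_1}\|_{op}\le\|Z_{L_1}\|_F\le\|X\|_{op}\prod_{\ell\le L_1}\|W_\ell\|_F$, obtained by alternating $\|\sigma(A)\|_F\le\|A\|_F$ with $\|W_\ell Z_{\ell-1}\|_F\le\|W_\ell\|_F\|Z_{\ell-1}\|_{op}\le\|W_\ell\|_F\|Z_{\ell-1}\|_F$. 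State it that way and the argument is airtight.
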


\begin{restatable}{theorem}{globalisnc}\label{thm:global_is_nc}
Let $\sigma$ satisfy Assumption \ref{ass:act}. Assume there exist parameters of the nonlinear part $\theta_{nonlin}=(W_\ell)_{\ell=1}^{L_1}$ such that $Z_{L_{1}}=Y$ and $\|\theta_{nonlin}\|_2^2=c$. Then, at any global minimizer of the regularized loss $\mathcal{L}_{\lambda}(\theta)=\frac{1}{2}\left\Vert Y-Z_{L}\right\Vert _{F}^{2}+\frac{\lambda}{2}\left\Vert \theta\right\Vert_2 ^{2}$
with $\lambda\leq\frac{\epsilon_{1}^{2}}{KL+c}$, we have
\begin{equation}\label{eq:ubkappa}
\begin{split}    
\kappa(W_{L:L_{1}+1})&\leq\left(\frac{\left\Vert X\right\Vert _{op}}{s_K(Y)-\epsilon_{1}}\right)^{K}\exp\left(\frac{1}{2}\left(c-L_{1}K+L_{1}K\log K\right)\right),\\
\kappa(W_{L})&\leq\left(\frac{\left\Vert X\right\Vert _{op}}{s_K(Y)-\epsilon_{1}}\right)^{\frac{K}{L_1}}\exp\left(\frac{1}{2L_1}\left(c-L_{1}K+L_{1}K\log K\right)\right).
\end{split}
\end{equation}
This implies that the bounds on NC1, NC2 and NC3 in \eqref{eq:NC1}, \eqref{eq:NC2*} and \eqref{eq:NC3}, respectively, hold with $\kappa(W_L)$ upper bounded as above and $\epsilon_2=0$.
\end{restatable}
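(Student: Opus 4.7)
The plan is to (a) compare the global minimum to an explicit competitor built from $\theta_{nonlin}$ together with an identity-like linear head, (b) invoke Proposition \ref{prop:conditioning_from_interp+param_norm} at the minimizer, (c) turn first-order optimality into exact balancedness across the linear layers, and finally (d) appeal to Theorem \ref{th:NC1}. For step (a) I would construct $\theta^{\star}$ by keeping the nonlinear part $\theta_{nonlin}$ and choosing each linear-layer weight $W_{\ell}^{\star}$ for $\ell\in\{L_{1}+1,\ldots,L\}$ to be a rectangular partial isometry whose composition acts as the identity on $\RR^{K}$, so that $\|W_{\ell}^{\star}\|_{F}^{2}=K$ and $Z_{L}^{\star}=Y$. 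This gives $\mathcal{L}_{\lambda}(\theta^{\star})=\tfrac{\lambda}{2}(c+L_{2}K)$, and comparing with any global minimizer $\theta$ yields $\|\theta\|_{2}^{2}\le c+L_{2}K=LK+(c-L_{1}K)$ together with $\|Z_{L}-Y\|_{F}\le\epsilon_{1}$, the latter via $\lambda\le\epsilon_{1}^{2}/(LK+c)$.

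For step (b), applying Proposition \ref{prop:conditioning_from_interp+param_norm} with its parameter-norm slack taken to be $c-L_{1}K$ (so that the bound $\|\theta\|_{2}^{2}\le LK+(c-L_{1}K)$ is what gets plugged in) gives precisely the first displayed bound on $\kappa(W_{L:L_{1}+1})$. For step (c), I would compute $\nabla_{W_{\ell}}\mathcal{L}_{\lambda}$ and $\nabla_{W_{\ell+1}}\mathcal{L}_{\lambda}$ for two consecutive linear layers and set them to zero. Writing $A=W_{L:\ell+1}$, $B=Z_{\ell-1}$, $R=Z_{L}-Y$ and using $W_{L:\ell+1}=W_{L:\ell+2}W_{\ell+1}$, stationarity yields
\begin{align*}
\lambda W_{\ell} &= -W_{\ell+1}^{\top}\,W_{L:\ell+2}^{\top}\,R\,Z_{\ell-1}^{\top},\\
\lambda W_{\ell+1} &= -W_{L:\ell+2}^{\top}\,R\,Z_{\ell-1}^{\top}\,W_{\ell}^{\top}.
\end{align*}
Right-multiplying the first identity by $W_{\ell}^{\top}$ gives exactly $W_{\ell+1}^{\top}$ times the second, whence $W_{\ell}W_{\ell}^{\top}=W_{\ell+1}^{\top}W_{\ell+1}$ for every $\ell\in\{L_{1}+1,\ldots,L-1\}$. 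Iterating this identity collapses the product to $W_{L:L_{1}+1}W_{L:L_{1}+1}^{\top}=(W_{L}W_{L}^{\top})^{L_{2}}$, so that $\kappa(W_{L})=\kappa(W_{L:L_{1}+1})^{1/L_{2}}$; taking the $L_{2}$-th root of the step-(b) bound yields the second displayed inequality (up to the exponent convention in the statement).

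Finally, for step (d), I would invoke Theorem \ref{th:NC1} with $\epsilon_{2}=0$ coming from exact balancedness, $\epsilon_{1}$ from step (a), the conditioning bound $c_{3}:=\kappa(W_{L:L_{1}+1})$ from step (b), and a radius $r$ obtained from $\|\theta\|_{2}^{2}\le LK+c$ together with Assumption \ref{ass:act} (so that $|\sigma(x)|\le|x|$ controls the forward propagation of $X$ through the nonlinear part) and boundedness of $X$. This yields the claimed bounds on $\text{NC1}$, $\text{NC2}$ and $\text{NC3}$.

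The main obstacle is step (c): one must confirm that the two stationarity identities hold strictly for pairs of \emph{linear} layers (so that no activation derivatives intrude) and that the left/right contractions align in just the right way to produce the clean $WW^{\top}=W^{\top}W$ balancing. A secondary subtlety is the competitor construction in step (a) when the intermediate widths $n_{L_{1}+1},\ldots,n_{L-1}$ exceed $K$: a rectangular partial isometry of Frobenius norm $\sqrt{K}$ still suffices, but this implicitly requires $\min_{\ell\ge L_{1}+1}n_{\ell}\ge K$, which is forced anyway since the linear head must output a rank-$K$ signal.
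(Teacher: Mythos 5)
Your proposal is correct and follows essentially the same route as the paper: compare against the competitor with identity-like linear head, invoke Proposition~\ref{prop:conditioning_from_interp+param_norm}, extract exact balancedness of the linear layers from first-order stationarity, and then apply Theorem~\ref{th:NC1} with $\epsilon_2=0$. Your observation that balancedness gives $\kappa(W_L)=\kappa(W_{L:L_1+1})^{1/L_2}$ (with $L_2$, not $L_1$) is the mathematically correct exponent; the $L_1$ appearing in \eqref{eq:ubkappa} and in the paper's own proof is a typographical slip, so you need not soften this with ``up to the exponent convention.''
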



The assumption that the parameters of the nonlinear part can be chosen to fit the labels $Y$ is guaranteed for large enough width \emph{(i)} by relying on any traditional approximation result \citep{Hornik1989,Leshno,arora_2018_relu_piecewise_lin,he_2018_relu_piecewise_lin}, or \emph{(ii)} by taking the infinite time limit of any convergence results \citep{QuynhMarco2020}, or \emph{(iii)} by taking the limit $\lambda\searrow 0$ in Proposition \ref{prop:PL}. A sketch of the arguments is below, with full proofs deferred to Appendix \ref{app:deferred}.

\paragraph{Proof sketch.} To prove Proposition \ref{prop:conditioning_from_interp+param_norm}, we write the norm of the linear and nonlinear parts of the network in terms of the conditioning of $W_{L:L_1+1}$: for the linear part, this is a direct computation; for the nonlinear part, we use Theorem 1 of \cite{dai_2021_repres_cost_DLN}, which lower bounds the norm of the parameters in terms of the product of the singular values, and then manipulate the latter quantity to obtain again the desired conditioning. Next, to prove Theorem \ref{thm:global_is_nc}, we pick the parameters of the nonlinear part $\theta_{nonlin}$ s.t.\ $Z_{L_{1}}=Y$ and $\|\theta_{nonlin}\|_2^2=c$, and set the linear layers to the identity. This leads to a total parameter norm of $KL_2+c$ and a regularized cost of $\frac{\lambda}{2}(KL_2+c)$, and it forces the global minimizer to satisfy the assumptions of Proposition \ref{prop:conditioning_from_interp+param_norm}, which gives the claim on $\kappa(W_{L:L_1+1})$. Then, as all local minimizers have balanced linear layers, $\kappa(W_{L})=\kappa(W_{L:L_1+1})^\frac{1}{L_1}$, which gives \eqref{eq:ubkappa}. Finally, the claim on NC1, NC2 and NC3 follows from an application of Theorem \ref{th:NC1}.

\subsection{Large Learning Rates}
Previous works have observed that the learning rates used in practice
are typically `too large', i.e.\ the loss may not always
be strictly decreasing and 
GD 
diverges from
gradient flow \citep{cohen_2021_edge_of_stability}. Thankfully, instead of simply diverging, for large $\eta$ (but
not too large) the parameters naturally end up at the `edge of stability': 
the top eigenvalue of the Hessian $\mathcal{H}C_{\lambda}$
is close to $\frac{2}{\eta}$, i.e., the threshold below which
GD is stable \citep{cohen_2021_edge_of_stability,lewkowycz_2020_large_lr}. One can thus interpret GD with learning rate
$\eta$ as minimizing the cost $C_{\lambda}$ amongst parameters $\theta$
such that $\left\Vert \mathcal{H}C_{\lambda}\right\Vert _{op}<\frac{2}{\eta}$. These observations are supported by strong empirical evidence and have been also proved theoretically for
simple models \citep{damian2022_edge_stability_simple_model}, although a general result remains difficult to prove due to the chaotic behavior of GD for
large $\eta$. 
Specifically, the Hessian has the form
\begin{equation}
    \mathcal{H}C_{\lambda}(\theta)=\left(\nabla_{\theta}Z_{L}\right)^{\top}\partial_{\theta}Z_{L}+\mathrm{Tr}\left[(Y-Z_{L})\nabla_{\theta}^{2}Z_{L}\right]+\lambda I_P,
\end{equation}
with $\nabla_{\theta}Z_{L}\in \mathbb R^{P\times NK}$ and $P$ the number of parameters.
The first term is the Fisher information matrix: this is dual to
the Neural Tangent Kernel (NTK) $\Theta=\nabla_{\theta}Z_{L}\left(\nabla_{\theta}Z_{L}\right)^{\top}\in\mathbb R^{NK\times NK}$ \citep{jacot2018neural}. 
Therefore, at approximately
interpolating points, we have $\left\Vert \mathcal{H}C_{\lambda}(\theta)\right\Vert _{op}=\left\Vert \Theta\right\Vert _{op}+O(\epsilon_{1})+O(\lambda)$, where $\epsilon_1$ is the interpolation error and $\lambda$ the regularization parameter.
We can thus interpret large learning as forcing a bound on the operator norm of the NTK. For networks that approximately interpolate the data with bounded NTK and bounded weights, the following proposition guarantees good conditioning of the weights in the linear part (and therefore NC2-3).

\begin{restatable}{prop}{largelearningrates}\label{thm:large_learning_rates}
For any network that satisfies
\begin{itemize}[leftmargin=6mm]
\item bounded NTK, i.e., $\left\Vert \Theta\right\Vert _{op}=\max_{A}\frac{\left\Vert \nabla_{\theta}\mathrm{Tr}\left[Z_{L}A^{T}\right]\right\Vert_2 ^{2}}{\left\Vert A\right\Vert _{F}^{2}}\leq CL_{2}$,
\item approximate interpolation, i.e., $\left\Vert Z_{L}-Y\right\Vert _{F}\leq\epsilon_{1}$,
\item bounded weights, i.e., $\left\Vert W_{\ell}\right\Vert _{op}\leq r$,
\end{itemize}
for any $M\leq L_{2}$, there is $\ell\in\{L_{1}+1,\dots,L_{1}+M\}$
such that $\kappa(W_{L:\ell})\leq\frac{\sqrt{CL_{2}}Kr}{\sqrt{M}\left(s_{K}(Y)-\epsilon_{1}\right)}$.

Furthermore, any network that satisfies approximate interpolation and bounded weights is such that
\begin{equation}\label{eq:NTKlb}
 \left\Vert \Theta\right\Vert _{op}\geq\frac{\left(s_{K}(Y)-\epsilon_{1}\right)^{2}}{K^{2}r^{2}}L_{2} .  
\end{equation}
\end{restatable}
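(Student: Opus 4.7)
The plan exploits the NTK's explicit form on linear layers. For $\ell\in\{L_1+1,\dots,L\}$, the gradient of $\mathrm{Tr}[Z_L A^T]$ with respect to $W_\ell$ equals $W_{L:\ell+1}^T A\, Z_{\ell-1}^T$ (with the convention $W_{L:L+1}=I$), so the identity $\|\nabla_\theta \mathrm{Tr}[Z_L A^T]\|_2^2 = \sum_{\ell=1}^L \|\nabla_{W_\ell} \mathrm{Tr}[Z_L A^T]\|_F^2$ yields, for any test matrix $A\in\mathbb{R}^{K\times N}$,
\[
\sum_{\ell=L_1+1}^{L} \|W_{L:\ell+1}^T A\, Z_{\ell-1}^T\|_F^2 \;\le\; \|\nabla_\theta \mathrm{Tr}[Z_L A^T]\|_2^2 \;\le\; \|\Theta\|_{op}\,\|A\|_F^2,
\]
since the nonlinear-layer terms are non-negative. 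This single chain drives both parts of the proposition. For the NTK lower bound, I would take $A=u_K v_K^T$ with $u_K,v_K$ the unit left and right singular vectors of $Y$ corresponding to $s_K(Y)$, so that $\|A\|_F=1$ and the $\ell$-th summand factorizes as $\|W_{L:\ell+1}^T u_K\|_2^2\,\|Z_{\ell-1}v_K\|_2^2$. Approximate interpolation together with Weyl gives $|u_K^T Z_L v_K| \ge s_K(Y)-\epsilon_1$, and substituting $Z_L=W_{L:\ell+1}W_\ell Z_{\ell-1}$ then applying Cauchy--Schwarz with $\|W_\ell\|_{op}\le r$ yields the per-layer bound $\|W_{L:\ell+1}^T u_K\|_2\,\|Z_{\ell-1}v_K\|_2 \ge (s_K(Y)-\epsilon_1)/r$. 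Squaring and summing over the $L_2$ linear layers gives $\|\Theta\|_{op}\ge L_2(s_K(Y)-\epsilon_1)^2/r^2$, which implies \eqref{eq:NTKlb}.

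For the first part, the same chain with $\|\Theta\|_{op}\le CL_2$ gives $\sum_{\ell=L_1+1}^{L_1+M}\|W_{L:\ell+1}^T A\, Z_{\ell-1}^T\|_F^2 \le CL_2\,\|A\|_F^2$, and pigeonhole across the first $M$ linear layers isolates some $\ell$ whose summand is at most $CL_2\|A\|_F^2/M$. To translate this into a bound on $\kappa(W_{L:\ell})$, I would combine two ingredients: bounded weights give $\|W_{L:\ell}\|_{op}\le r\,\|W_{L:\ell+1}\|_{op}$, while Weyl applied to $W_{L:\ell}Z_{\ell-1}=Z_L$ together with approximate interpolation yields $s_K(W_{L:\ell}) \ge (s_K(Y)-\epsilon_1)/\|Z_{\ell-1}\|_{op}$ (via $s_K(W_{L:\ell}Z_{\ell-1})\le s_K(W_{L:\ell})\,s_1(Z_{\ell-1})$). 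These together give
\[
\kappa(W_{L:\ell}) \;\le\; \frac{r\,\|W_{L:\ell+1}\|_{op}\,\|Z_{\ell-1}\|_{op}}{s_K(Y)-\epsilon_1},
\]
so it remains to upper bound the operator-norm product at the layer produced by pigeonhole.

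The hard part is exactly this last upgrade: passing from control of rank-one forms at the fixed direction $(u_K,v_K)$ to control of the full operator norms of $W_{L:\ell+1}$ and $Z_{\ell-1}$, since their top singular directions need not align with the singular directions of $Y$. I expect the fix is to rerun the pigeonhole with a rank-$K$ test matrix proportional to $Y$, which simultaneously probes all $K$ label directions and produces $\|W_{L:\ell+1}^T Y\, Z_{\ell-1}^T\|_F^2$ as the summand, and then to lower bound this by a constant multiple of $(s_K(Y)-\epsilon_1)^2\,\|W_{L:\ell+1}\|_{op}^2\,\|Z_{\ell-1}\|_{op}^2/K^2$ through singular-value bookkeeping using that $YY^T$ is diagonal with entries equal to class sizes. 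The factor $K$ in the stated bound on $\kappa(W_{L:\ell})$ is exactly what one pays by spreading the unit Frobenius budget of $A\propto Y$ across the $K$ singular directions of the labels; combining this with the $\sqrt{CL_2/M}$ afforded by pigeonhole over $M$ layers then produces the claimed bound $\sqrt{CL_2}\,Kr/(\sqrt{M}(s_K(Y)-\epsilon_1))$.
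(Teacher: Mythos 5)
Your lower bound on $\|\Theta\|_{op}$ (the second claim) is correct as sketched, and the route is essentially sound: taking $A=u_Kv_K^T$ with $u_K,v_K$ the $K$-th singular pair of $Y$, the layer-$\ell$ summand factorizes as $\|W_{L:\ell+1}^Tu_K\|_2^2\|Z_{\ell-1}v_K\|_2^2$, and Cauchy--Schwarz applied to $u_K^TZ_Lv_K = u_K^TW_{L:\ell+1}W_\ell Z_{\ell-1}v_K$ with $\|W_\ell\|_{op}\le r$ gives the claimed per-layer bound. This single rank-one probe in fact produces $\|\Theta\|_{op}\ge L_2(s_K(Y)-\epsilon_1)^2/r^2$, which is even slightly tighter than the paper's $L_2(s_K(Y)-\epsilon_1)^2/(K^2r^2)$: the paper pays the extra $K^2$ because it averages over $K^2$ probes of total Frobenius mass $K^2$.

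For the first claim, however, the gap you flag at the end is exactly where the argument fails, and the ``rank-$K$ test matrix with singular-value bookkeeping'' you speculate about is not enough. The difficulty is structural: the pigeonhole gives you an \emph{upper} bound on one summand $\|W_{L:\ell+1}^TA\,Z_{\ell-1}^T\|_F^2$, but you need that summand to be \emph{lower} bounded by something proportional to $\kappa(W_{L:\ell})^2$. Your per-layer bound lower bounds it by the constant $(s_K(Y)-\epsilon_1)^2/r^2$, which re-proves the NTK lower bound but carries no information about the conditioning. Going through $\kappa(W_{L:\ell})\le r\|W_{L:\ell+1}\|_{op}\|Z_{\ell-1}\|_{op}/(s_K(Y)-\epsilon_1)$ is a dead end because the top singular directions of $W_{L:\ell+1}$ and $Z_{\ell-1}$ need not be visible through any test matrix supported on the $K$-dimensional label subspace, so no choice of $A$ can upper bound their operator norms.

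The missing idea (which is what the paper actually uses) is to choose the probe directions $v_j$ to span the \emph{preimage} of $Z_L$ rather than the singular directions of $Y$. With that choice, the vector $W_{\ell-1:L_1+1}Z_{L_1}v_j$ can be replaced, after projection onto the row space of $W_{L:\ell}$, by $W_{L:\ell}^+Z_Lv_j$, and this turns the layer-$\ell$ summand (summed over the $K^2$ rank-one probes $e_iv_j^T$) into $\|W_{L:\ell}^+Z_L\|_F^2\|W_{L:\ell+1}\|_F^2$, which is bounded below by $s_K(Z_L)^2\,\|W_{L:\ell}^+\|_F\,\|W_{L:\ell}\|_F/\|W_\ell\|_{op} \ge (s_K(Y)-\epsilon_1)^2\kappa(W_{L:\ell})^2/r^2$ (using $\|W_{L:\ell}^+\|_F\ge \|W_{L:\ell}^+\|_{op}=1/s_K(W_{L:\ell})$ and $\|W_{L:\ell}\|_F\ge\|W_{L:\ell}\|_{op}$). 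Now the pigeonhole over $\sum_\ell\kappa(W_{L:\ell})^2\le K^2r^2CL_2/(s_K(Y)-\epsilon_1)^2$ gives the stated conditioning bound. You should replace your final paragraph with this projection step; the ``spread the Frobenius budget of $A\propto Y$'' heuristic does not deliver the $\kappa(W_{L:\ell})^2$ on the correct side.
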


As an example, by choosing $M=\frac{L_2}{2}$, we guarantee that there is at least one layer $\ell$ in the first half of the linear layers s.t.\  $\kappa(W_{L:\ell})\leq\frac{\sqrt{2C}Kr}{s_{K}(Y)-\epsilon_{1}}$. Now, assuming the `edge of stability' phenomenon, a learning rate of $\eta=\frac{\eta_{0}}{L_{2}}$ implies a bound
$\left\Vert \Theta\right\Vert _{op} \leq \left\Vert \mathcal{H} C_\lambda \right\Vert _{op}  + O(\epsilon_1) + O(\lambda) \leq \frac{2 L_{2}}{\eta_0} + O(\epsilon_1) + O(\lambda)$, and thus Proposition \ref{thm:large_learning_rates} implies that there is a linear layer with bounded conditioning. Proposition \ref{thm:large_learning_rates} also suggests that one cannot take any significantly larger learning rate, since for any parameters that $\epsilon_{1}$-interpolate
and have $r$-bounded weights, the NTK satisfies the lower bound in \eqref{eq:NTKlb}, which implies that 
the learning rate must 
be smaller than $\frac{2r^{2}K^{2}}{L_{2}\left(s_{K}(Y)-\epsilon_{1}\right)^{2}}$. 

\paragraph{Proof sketch.}
The idea 
is that, by suitably choosing $A$ in the evaluation of $\frac{\left\Vert \nabla_{\theta}\mathrm{Tr}\left[Z_{L}A^{T}\right]\right\Vert_2 ^{2}}{\left\Vert A\right\Vert _{F}^{2}}$,  the operator norm of the NTK is lower bounded by some constant times $\sum_{\ell=L_{1}+1}^{L}\kappa(W_{L:\ell})^2$. This implies the desired upper bound on $\kappa(W_{L:\ell})$ for some $\ell\in \{L_1+1, \ldots, L_1+M\}$, as well as the lower bound on $\|\Theta\|_{op}$ in \eqref{eq:NTKlb}. The details are deferred to Appendix \ref{app:deferred}.

\section{Numerical Results}\label{sec:experiments}


In all experiments, we consider MSE loss and standard weight decay regularization. We train an MLP and a ResNet20 with an added MLP head on standard datasets (MNIST, CIFAR10), considering as backbone the first two layers for the MLP and the whole architecture before the linear head for the ResNet. 
We evaluate the following metrics related to neural collapse: for NC1, we compute $\text{tr}(\Sigma_W)/\text{tr}(\Sigma_B),$ where $\Sigma_W, \Sigma_B$ are the within- and between-class variability matrices of the feature matrices, respectively; for NC2, we display the conditioning number of the class-mean matrix; for NC3, we use the average cosine angle between the rows of a weight matrix and the columns of the preceding class-mean matrix. Finally, for balancedness we use $\frac{\left\Vert W_{\ell+1}^\top W_{\ell+1}-W_\ell W_\ell^\top\right\Vert_{op}}{\min\left\{\left\Vert W_{\ell+1}^\top W_{\ell+1}\right\Vert_{op}, \left\Vert W_\ell W_\ell^\top\right\Vert_{op}\right\}}$ and for negativity we use $\frac{\left\Vert Z_\ell-\sigma(Z_\ell)\right\Vert_{op}}{\left\Vert Z_\ell\right\Vert_{op}}.$  We measure such metrics (and also index the layers) starting from the output of the backbone. Our findings can be summarized as follows (see Appendix~\ref{app:experiments} for additional complementary experiments).

\paragraph{The deeper the linear head, the more clear NC occurs.}
We first test whether the models with deep linear heads exhibit NC and if that's the case, whether it gets better as we deepen the linear head. In Figure~\ref{fig:single_hyperparam_demo}, we show the NC metrics and the gram matrices of the class-mean matrices of the last layers for the training on CIFAR10 of ResNet20 with 6 extra layers of which the first three have a ReLU activation. We use weight decay of $0.001$ and learning rate of $0.001$, training for 5000 epochs (the learning rate drops ten-fold after 80\% of the epochs in all our experiments). 
\begin{figure}
    \centering
    \includegraphics[width=0.24\linewidth]{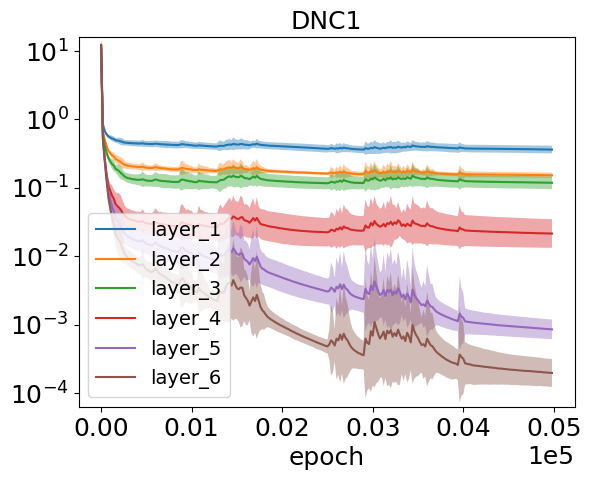}
    \includegraphics[width=0.24\linewidth]{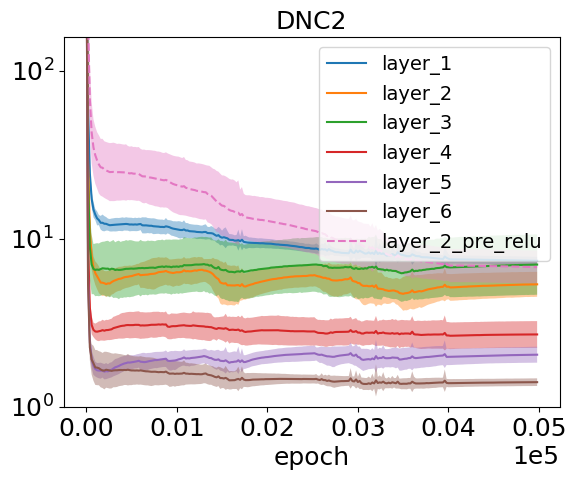}
    \includegraphics[width=0.24\linewidth]{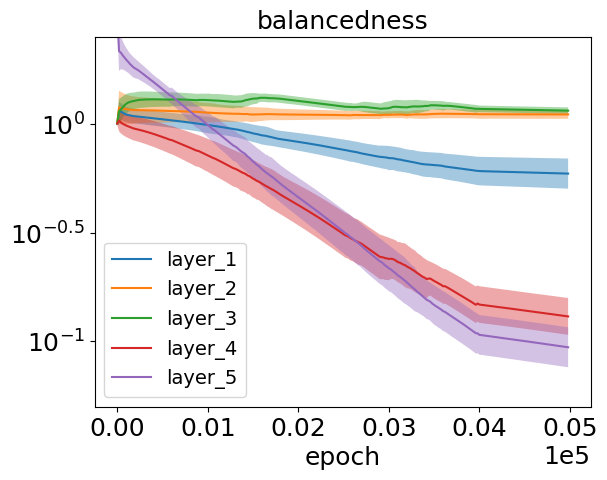}
    \includegraphics[width=0.24\linewidth]{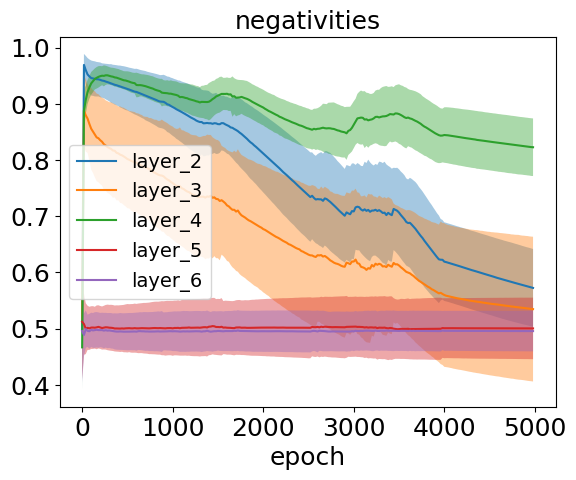}
    \includegraphics[width=0.24\linewidth]{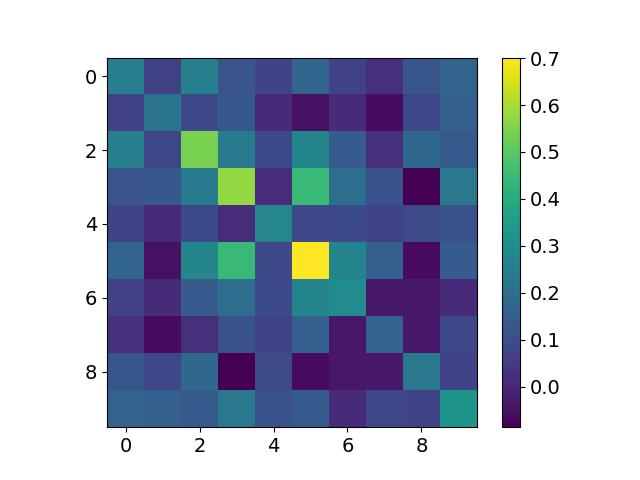}
    \includegraphics[width=0.24\linewidth]{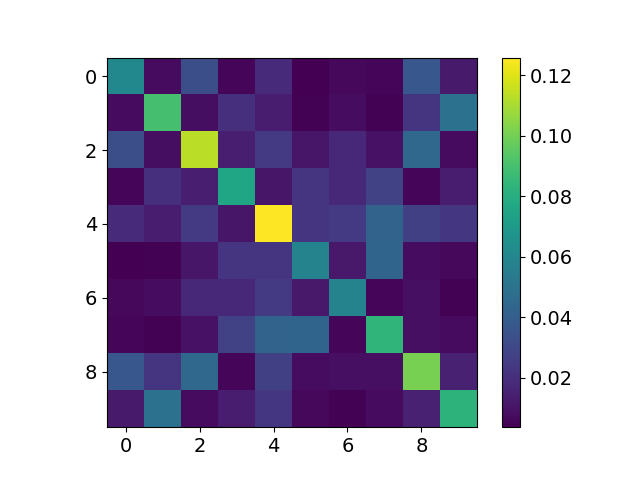}
    \includegraphics[width=0.24\linewidth]{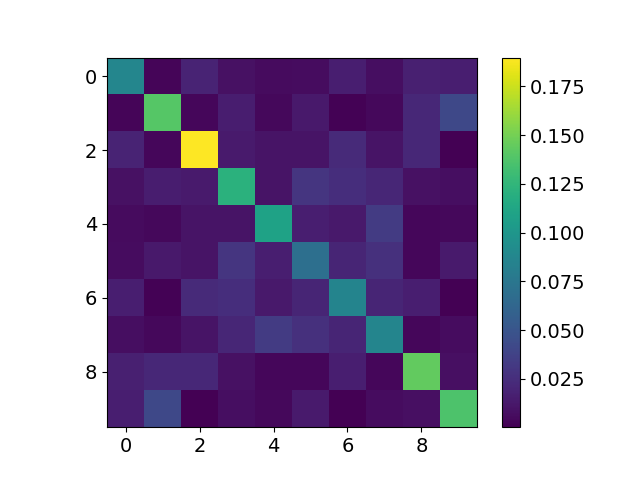}
    \includegraphics[width=0.24\linewidth]{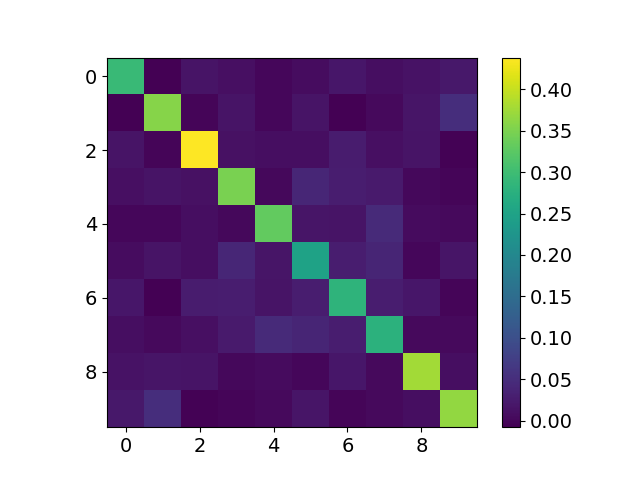}
    \caption{Last 7 layers of a 9-layer MLP trained on MNIST with weight decay $0.0018$ and learning rate $0.001$. \textbf{Top:} NC1s, NC2s, balancednesses and negativities, from left to right. Results are averaged over 5 runs, and the confidence band at 1 standard deviation is displayed. \textbf{Bottom:} Class-mean matrices of the last three layers (i.e., the linear head), the first before the last ReLU.}
    \label{fig:single_hyperparam_demo}
\end{figure}
The plot clearly shows that the collapse is reached throughout the training. We also see that the NC2 metric improves progressively with each layer of the linear head, as predicted by our theory. This effect is also clearly visible from the gram matrices of the class-means (bottom row of Figure~\ref{fig:single_hyperparam_demo}), which rapidly converge towards the identity. We note that these findings are remarkably consistent across a wide variety of hyperparameter settings and architectures. 

\begin{figure}
    \centering
    \includegraphics[width=0.24\linewidth]{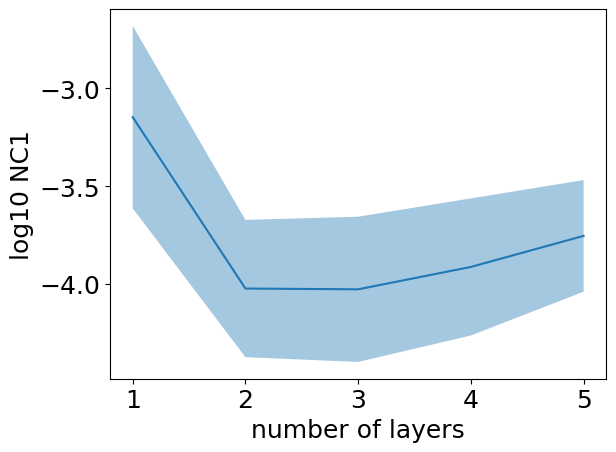}
    \includegraphics[width=0.24\linewidth]{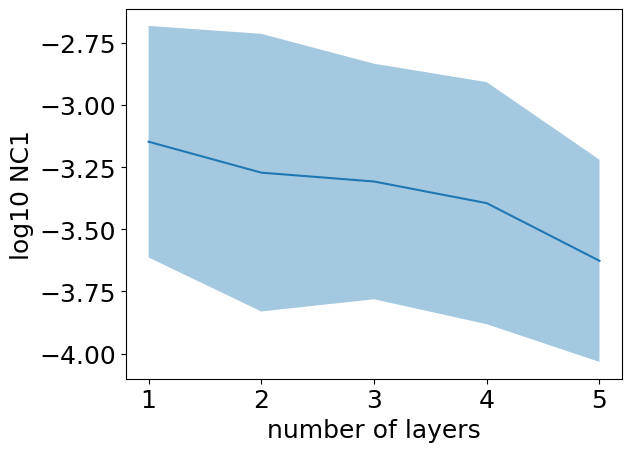}
    \includegraphics[width=0.24\linewidth]{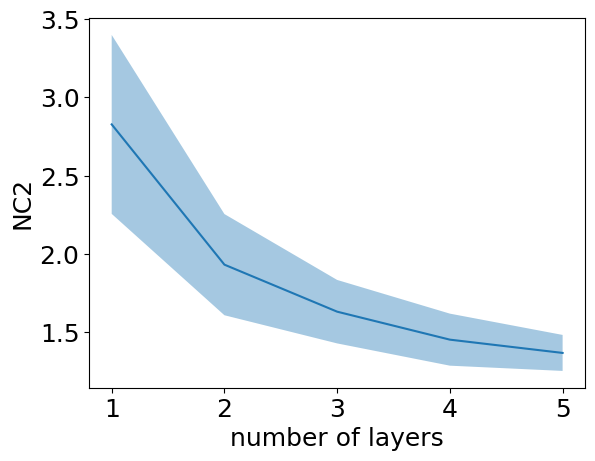}
    \includegraphics[width=0.24\linewidth]{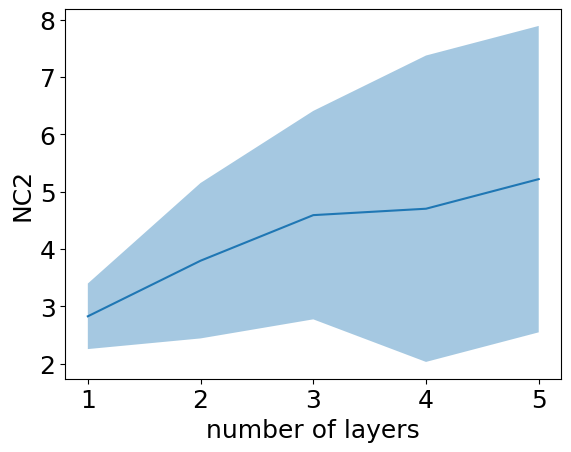}
    \includegraphics[width=0.24\linewidth]{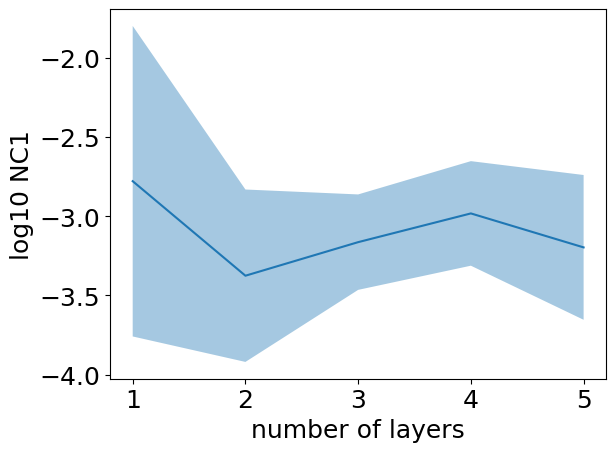}
    \includegraphics[width=0.24\linewidth]{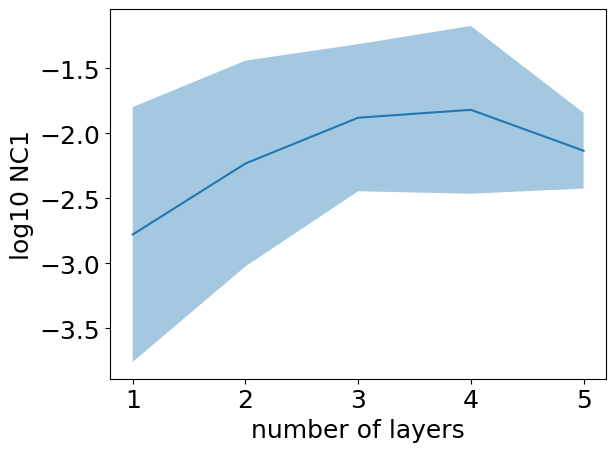}
    \includegraphics[width=0.24\linewidth]{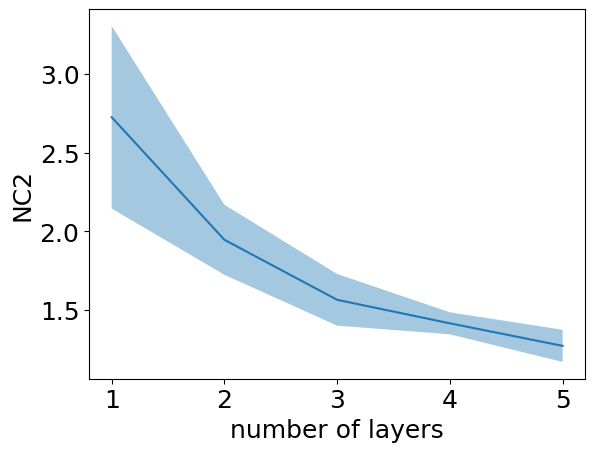}
    \includegraphics[width=0.24\linewidth]{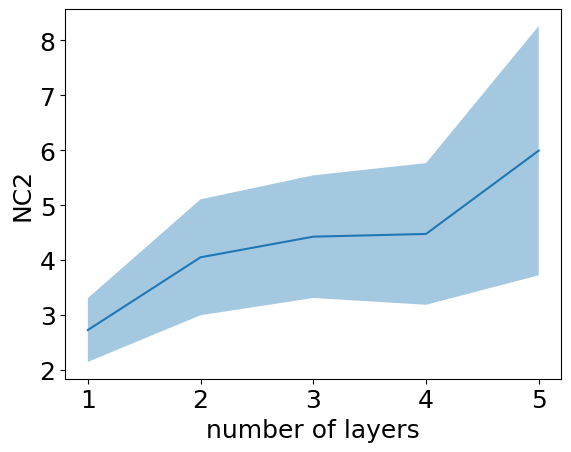}
\caption{\textbf{Upper/Lower row:} MLP/ResNet20 with a deep linear head. \textbf{Left to right:} NC1 in the last layer; NC1 in the first layer of the linear head; NC2 in the last layer; NC2 in the first layer of the linear head. All plots are a function of the number of layers in the linear head. Results are averaged over 50 runs (5 runs for each of the 10 hyperparameter setups), and the confidence band at 1 standard deviation is displayed.}
    \label{fig:nc_vs_depth}
\end{figure}

In Figure~\ref{fig:nc_vs_depth}, we plot the dependence of the NC metrics on the number of layers in the linear head. We train an MLP on MNIST with 5 non-linear layers and a  number of linear layers ranging from 1 to 5. We average over 5 runs per each combination of weight decay ($0.001, 0.004$) and with learning rate of $0.001$. We also train the ResNet20 on CIFAR10 with one non-linear layer head and 1 to 6 linear layers on top. We use the same weight decay and learning rate. The relatively high variance of the results is due to averaging over rather strongly different weight decay values used per each depth. The plots clearly show that the NC2 significantly improves in the last layer as the depth of the linear head increases, while it gets slightly worse in the input layer to the linear head. This is consistent with our theory. The NC1 does not have a strong dependence with the number of layers.

\paragraph{Linear layers are increasingly balanced throughout training.} Figure~\ref{fig:single_hyperparam_demo} shows that the metric capturing the balancedness exponentially decreases until it plateaus at a rather small value (due to the large learning rate) and then again it exponentially decreases at a smaller rate after reducing the learning rate. 
The balancedness of non-linear layers instead plateaus at a significantly larger value. 

\paragraph{Non-linear layers are increasingly balanced and linear, as the depth of the non-linear part increases.} 
\begin{figure}
    \centering
    \includegraphics[width=0.24\linewidth]{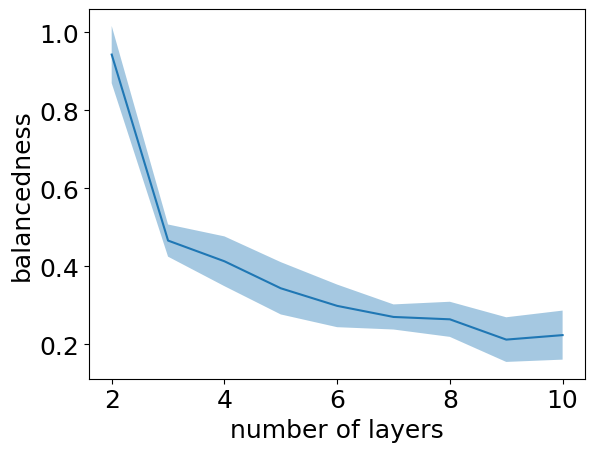}
    \includegraphics[width=0.24\linewidth]{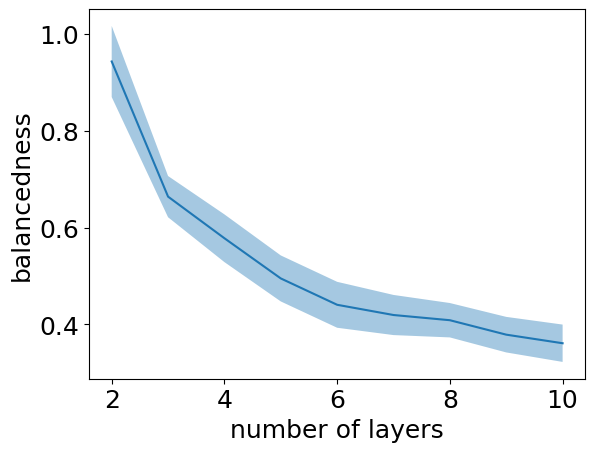}
    \includegraphics[width=0.24\linewidth]{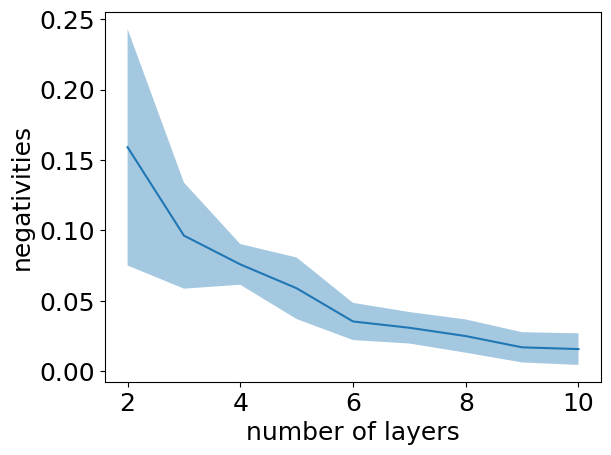}
    \includegraphics[width=0.24\linewidth]{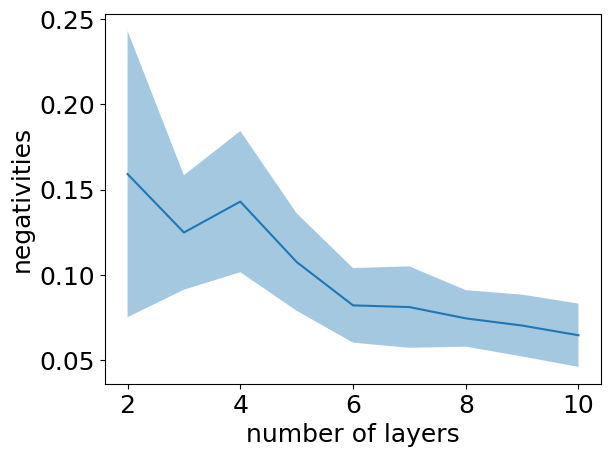}
    \caption{\textbf{Left to right:} Minimum balancedness; mean balancedness; minimum negativity; mean negativity across non-linear layers of the head as a function of the number of non-linear layers. Results are averaged over 10 runs (5 runs for each of the 2 hyperparameter setups), and the confidence band at 1 standard deviation is displayed.}
\label{fig:balancedness_negativity_vs_depth}
\end{figure}
In Figure~\ref{fig:balancedness_negativity_vs_depth}, we show the dependence of balancedness and non-linearity of the non-linear layers as a function of the depth of the non-linear part. In particular, we plot minimum balancedness and non-linearity across all layers, as well as mean balancedness and non-linearity. The depth of the non-linear part ranges from 4 to 12 (the first two layers are considered as backbone and not measured), the learning rate is either $0.001$ or $0.002$ (5 runs each), and the weight decay is $0.016$ divided by the total number of layers. Balancedness clearly improves with depth, both on average per layer and in the most balanced layer. Similarly, the non-negativity of the layer that least uses the ReLU clearly decreases, and a decrease in the mean negativity is also reported (although less pronounced). Thus, these results suggest that the network tends to use non-linearities less as the depth increases and, in addition, it becomes more balanced. 
We also note that, in order to fit the data, regardless of the depth, the last non-linear layer(s) always exhibit significant negativity, heavily relying on the ReLU. 

\section*{Acknowledgements}

M. M.\ and P. S.\ are funded by the European Union (ERC, INF$^2$, project number 101161364). Views and opinions expressed are however
those of the author(s) only and do not necessarily reflect those of the European Union or the
European Research Council Executive Agency. Neither the European Union nor the granting
authority can be held responsible for them.






\bibliography{bibliography.bib}
\bibliographystyle{iclr2025_conference}

\newpage

\appendix
\section{Additional Experiments}\label{app:experiments}
We complement the experiments from Section~\ref{sec:experiments} with additional numerical findings. We start by showing an analog of Figure~\ref{fig:single_hyperparam_demo} for MLP trained on MNIST to show that the behavior is robust with respect to the backbone and dataset. The results are shown in Figure~\ref{fig:single_hyperparam_demo_app}. The architecture is an MLP with 5 non-linear layers followed by 4 linear layers. We use weight decay of $0.0018$ and learning rate of $0.001$, training for 10000 epochs.
\begin{figure}
    \centering
    \includegraphics[width=0.24\linewidth]{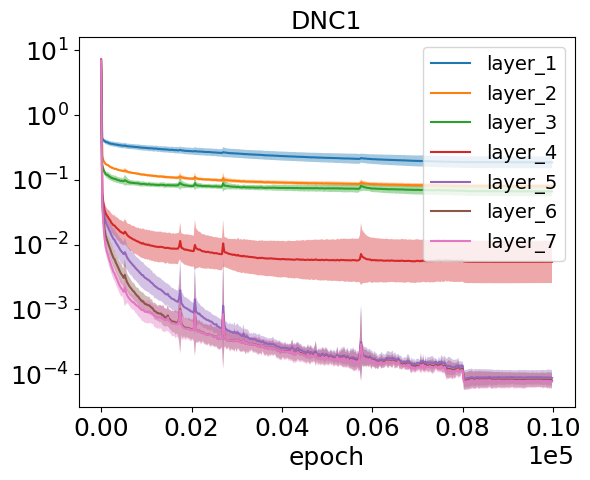}
    \includegraphics[width=0.24\linewidth]{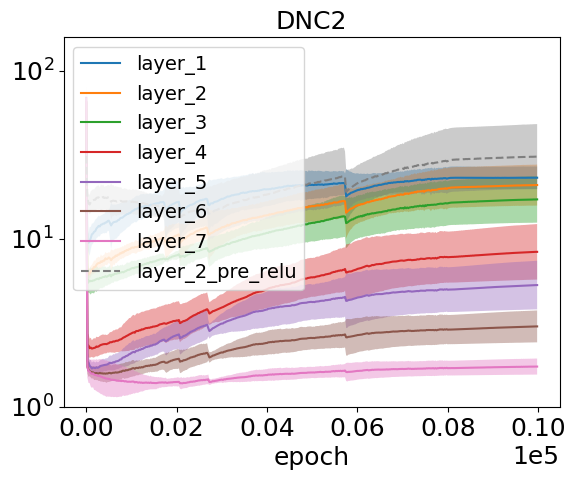}
    \includegraphics[width=0.24\linewidth]{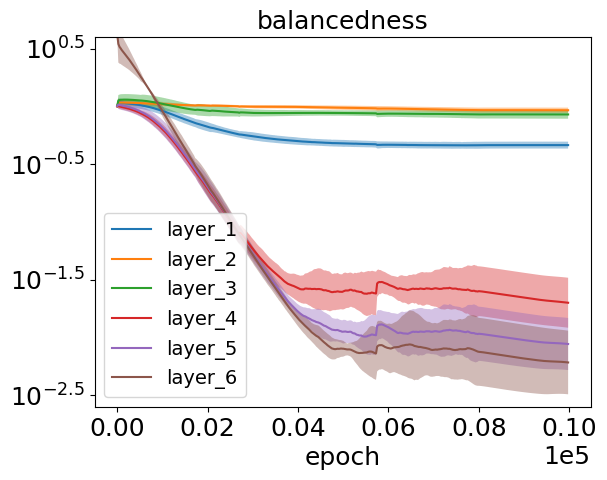}
    \includegraphics[width=0.24\linewidth]{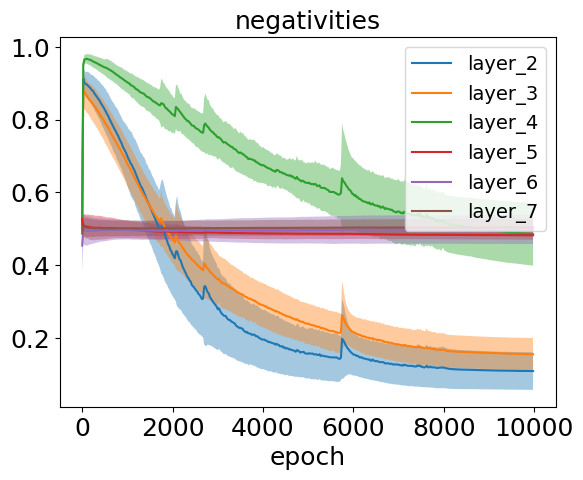}
    \includegraphics[width=0.24\linewidth]{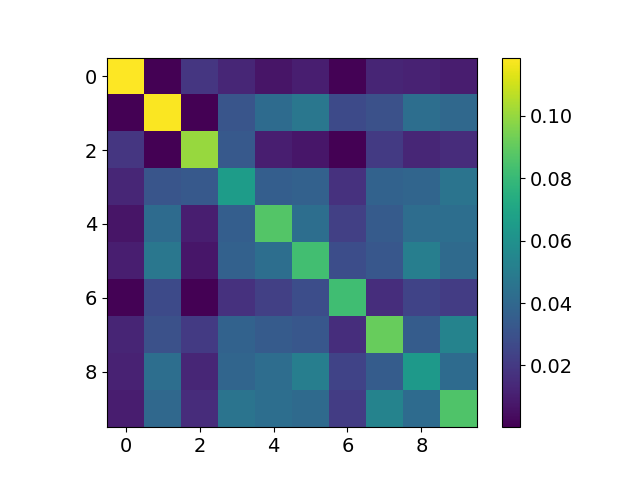}
    \includegraphics[width=0.24\linewidth]{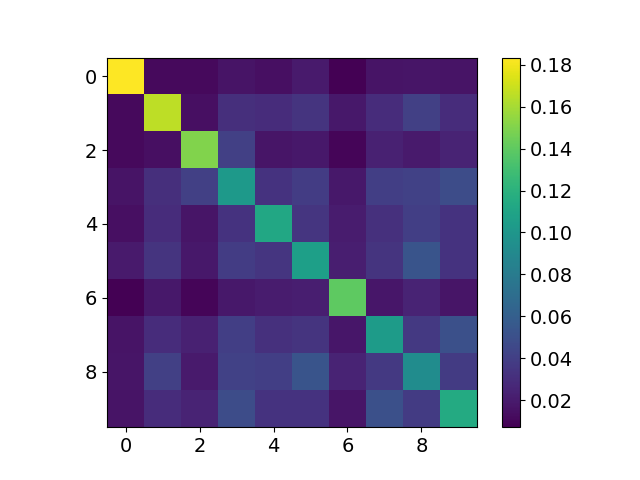}
    \includegraphics[width=0.24\linewidth]{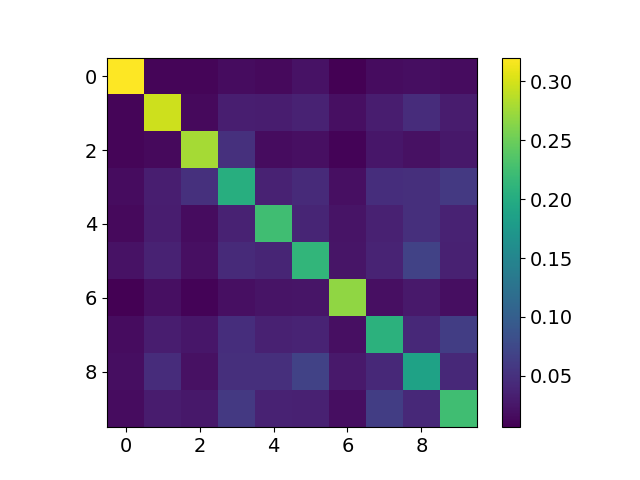}
    \includegraphics[width=0.24\linewidth]{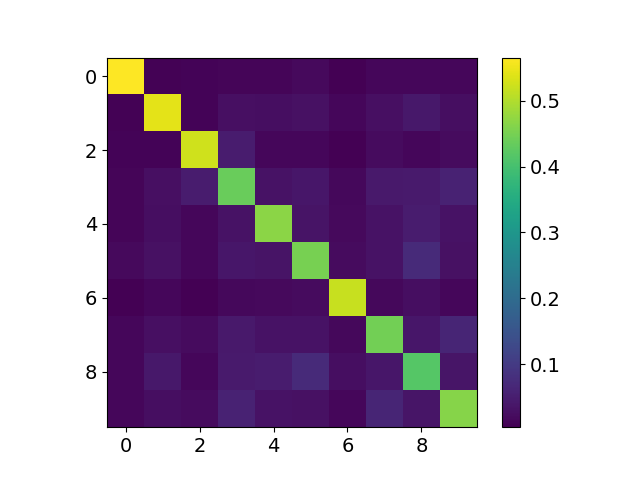}
    \caption{Last 7 layers of a 9-layer MLP trained on MNIST with weight decay $0.0018$ and learning rate $0.001$. \textbf{Top:} NC1s, NC2s, balancednesses and negativities, from left to right. Results are averaged over 5 runs, and the confidence band at 1 standard deviation is displayed. \textbf{Bottom:} Class-mean matrices of the last four layers (i.e., the linear head).}
    \label{fig:single_hyperparam_demo_app}
\end{figure}
The plot fully agrees with the one in Figure~\ref{fig:single_hyperparam_demo} with ResNet20 training on CIFAR10 in every qualitative aspect, and it only differs in the numerical values attained by the NC metrics. Therefore, the conclusion is that the NC is attained across different architectures and NC2 progressively improves as we get closer to the last layer of the DNN. 

Next, we extend Figure~\ref{fig:nc_vs_depth} with experiments on ResNet20 trained on MNIST, which are shown in Figure~\ref{fig:nc_vs_depth_app}. As above, the results match the interpretations discussed in Section~\ref{sec:experiments}, which proves the robustness of our findings across different achitectures, datasets and hyperparameter settings. 

\begin{figure}
    \centering
    \includegraphics[width=0.24\linewidth]{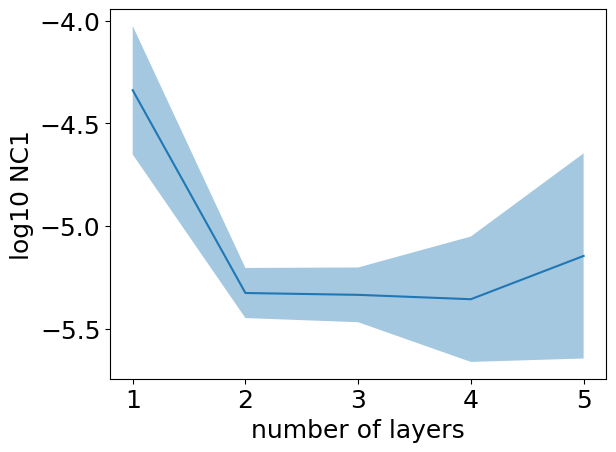}
    \includegraphics[width=0.24\linewidth]{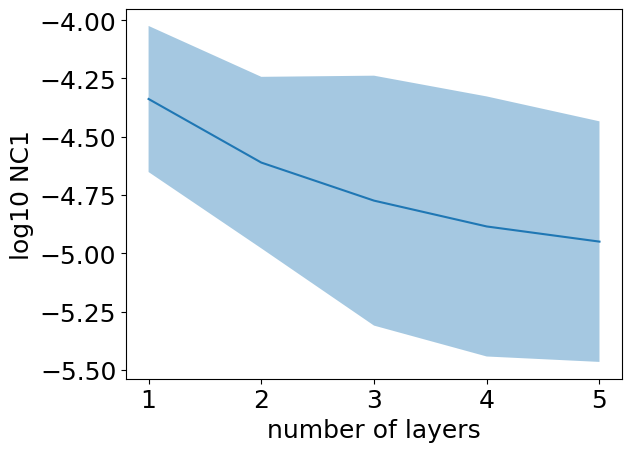}
    \includegraphics[width=0.24\linewidth]{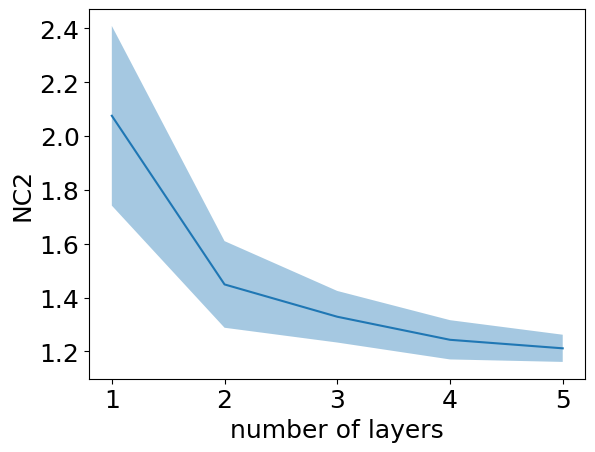}
    \includegraphics[width=0.24\linewidth]{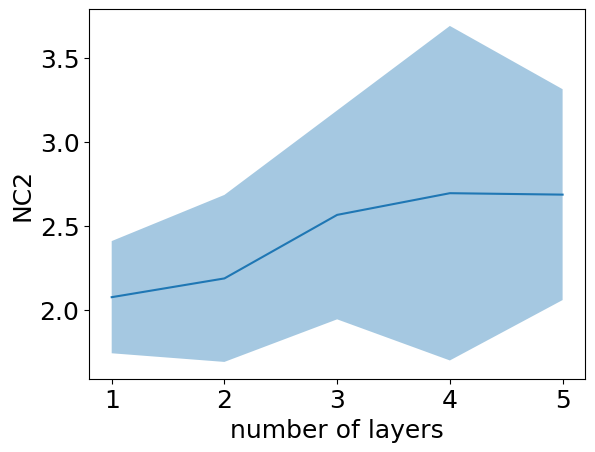}
    \caption{ResNet20 trained on MNIST with a deep linear head. \textbf{Left to right:} NC1 in the last layer; NC1 in the first layer of the linear head; NC2 in the last layer; NC2 in the first layer of the linear head. All plots are a function of the number of layers in the linear head. Results are based on 50 runs (5 runs for each of the 10 hyperparameter setups), and the confidence band at 1 standard deviation is displayed.}
    \label{fig:nc_vs_depth_app}
\end{figure}

\section{Deferred Proofs}\label{app:deferred}

\metaresult*

\begin{proof} \textbf{NC1:} We start by proving the claim \eqref{eq:NC1} on NC1. 
We have
\begin{equation}\label{eq:lbsK}
s_K(Y)-\epsilon_1 \le s_K(Y+(Z_L-Y))=s_K(Z_L)=s_K(W_LZ_{L-1})\le s_K(W_L)r.
\end{equation}
Denote $W_L^+$ the pseudoinverse of $W_L.$ Then, we have that $W_L^+W_L$ equals the projection $P$ on the row space of $W_L.$ We can now write $$Z_{L-1}=PZ_{L-1}+(I-P)Z_{L-1}=W_L^+Y+W_L^+(Z_L-Y)+(I-P)W_{L-1}Z_{L-2}.$$ Note that $$\left\Vert W_{L}^{+}(Z_{L}-Y)\right\Vert_{F}\leq\frac{\epsilon_{1}r}{s_K(Y)-\epsilon_1},$$ since $s_K(W_L)\ge (s_K(Y)-\epsilon_1)/r$ by \eqref{eq:lbsK}. Furthermore,  
\begin{align*}
\left\Vert (I-P)W_{L-1}Z_{L-2}\right\Vert_{F}^{2} &\le \left\Vert (I-P)W_{L-1}\right\Vert_{F}^{2}\left\Vert Z_{L-2}\right\Vert_{op}^{2} \le r^2 \text{tr}((I-P)W_{L-1}W_{L-1}^\top) \\ &= r^2 \text{tr}((I-P)(W_{L-1}W_{L-1}^\top-W_L^\top W_L)) \le r^2 n_{L-1} \epsilon_2.
\end{align*}
Putting these together, we have:
\begin{equation}\label{eq:bdZ}
\norm{Z_{L-1}-W_L^+Y}_F\le r\left(\frac{\epsilon_{1}}{s_K(Y)-\epsilon_1}+\sqrt{n_{L-1} \epsilon_2}\right)=\Psi(\epsilon_1, \epsilon_2, r).
\end{equation}

From now on, we will drop the index $L-1$ and treat everything without layer label as belonging to that layer (the membership to any other layer will be indexed). We have that
\begin{align*} 
\text{tr}(\Sigma_W)&=\text{tr}\left(\frac{1}{N}\sum_{c, i}
(z_{ci}-\mu_c)(z_{ci}-\mu_c)^\top\right)=\frac{1}{N}\sum_{c, i}
\left\Vert z_{ci}-\mu_c \right\Vert_2^2 \\ &\le \frac{1}{N}\sum_{c, i}
\left\Vert z_{ci}-(W_L^{+})_{c:}\right\Vert_2^2=\frac{1}{N}\left\Vert Z_{L-1}-W_L^{+}Y \right\Vert^2_F.    
\end{align*}
Furthermore,
\begin{align*}
\text{tr}(\Sigma_B)&=\text{tr}\left(\frac{1}{K}\sum_{c=1}^K (\mu_c-\mu_G)(\mu_c-\mu_G)^\top\right)=\frac{1}{K}\sum_{c=1}^K \left\Vert \mu_c - \mu_G \right\Vert_2^2  \\ &\ge \frac{\left\Vert W_L \right\Vert^2_{op}}{r^2}\frac{1}{K}\sum_{c=1}^K\left\Vert\mu_c-\mu_G\right\Vert_2^2\ge \frac{1}{Kr^2}\sum_{c=1}^K \left\Vert\mu_c^L-\mu_G^L\right\Vert_2^2.
\end{align*} 
Now we proceed by lower-bounding the last term: 
\begin{align*}
    \frac{1}{K}\sum_{c=1}^K \left\Vert\mu_c^L-\mu_G^L\right\Vert_2^2 &\ge \left(\frac{1}{K}\sum_{c=1}^K \left\Vert \mu_c^L-\mu_G^L\right\Vert_2\right)^2 \\ &\ge \left(\frac{1}{K}\sum_{c=1}^K \left\Vert \mu_c^Y-\mu_G^Y\right\Vert_2 - \frac{1}{K}\sum_{c=1}^K \left\Vert \mu_c^Y-\mu_c^L\right\Vert_2 - \left\Vert \mu_G^Y-\mu_G^L\right\Vert_2\right)^2,
\end{align*}
where $\mu_c^Y, \mu_G^Y$ are the class and global means of the label matrix $Y.$ A direct computation yields that $\frac{1}{K}\sum_{c=1}^K \left\Vert \mu_c^Y-\mu_G^Y\right\Vert = \sqrt{\frac{K-1}{K}}.$ Next we have:
\begin{align*}
\left\Vert \mu_G^Y-\mu_G^L\right\Vert_2 &=\left\Vert \frac{1}{N}\sum_{c, i}
z_{ci}^L-\frac{1}{N}\sum_{c, i}
z_{ci}^Y\right\Vert_2 \le \frac{1}{N}\sum_{c, i}
\left\Vert z_{ci}^L-z_{ci}^Y\right\Vert_2 \\ &\le \sqrt{\frac{1}{N}\sum_{c, i}
\left\Vert z_{ci}^L-z_{ci}^Y\right\Vert_2^2} = \frac{1}{\sqrt{N}}\left\Vert Z_L-Y\right\Vert_F \le \frac{\epsilon_1}{\sqrt{N}}. 
\end{align*}
Finally, for any fixed $c$ we have
\begin{align*}
\left\Vert \mu_c^Y-\mu_c^L \right\Vert_2 = \left\Vert \frac{1}{n}\sum_{i}
z_{ci}^Y-\frac{1}{n}\sum_{i}
z_{ci}^L\right\Vert_2 \le \frac{1}{n} \sum_{i}
\left\Vert z_{ci}^Y-z_{ci}^L\right\Vert_2.
\end{align*}
Therefore we get: 
\begin{align*}
\frac{1}{K}\sum_{c=1}^K \left\Vert \mu_c^Y-\mu_c^L\right\Vert_2 \le \frac{1}{N} \sum_{c,i}
\left\Vert z_{ci}^Y-z_{ci}^L\right\Vert_2 \le \frac{1}{\sqrt{N}} \left\Vert Z_L-Y\right\Vert_F \le \frac{\epsilon_1}{\sqrt{N}}.
\end{align*}
Upper bounding these terms in the above computation and dividing $\text{tr}(\Sigma_W)$ by $\text{tr}(\Sigma_B)$ yields \eqref{eq:NC1}. 

\textbf{Conditioning of $W_L$:}
We now prove the claim \eqref{eq:NC2} on the conditioning of $W_L$. We rely first on Lemma \ref{lem:approximate_balancedness_relate_weight_to_product} to relate the singular values of $W_{L}$ to the $L_2$-th root of those of $W_{L:L_1+1}$. We therefore obtain that
\begin{align*}
\frac{s_{1}(W_{L})^{2L_{2}}}{s_{K}(W_{L})^{2L_{2}}} & \leq\frac{s_{1}(W_{L:L_{1}+1})^{2}+\frac{L_2^2}{2}r^{2(L_2-1)}\epsilon_2}{s_{K}(W_{L:L_{1}+1})^{2}-\frac{L_2^2}{2}r^{2(L_2-1)}\epsilon_2}\\
 & =\frac{s_{1}(W_{L:L_{1}+1})^{2}}{s_{K}(W_{L:L_{1}+1})^{2}}+\frac{s_{K}(W_{L:L_{1}+1})^{2}\frac{L_2^2}{2}r^{2(L_2-1)}\epsilon_2+s_{1}(W_{L:L_{1}+1})^{2}\frac{L_2^2}{2}r^{2(L_2-1)}\epsilon_2}{\left(s_{K}(W_{L:L_{1}+1})^{2}-\frac{L_2^2}{2}r^{2(L_2-1)}\epsilon_2\right)s_{K}(W_{L:L_{1}+1})^{2}}.
\end{align*}
We have that 
\[
s_{K}(Z_{L})\leq s_{K}(W_{L:L_{1}+1})\left\Vert Z_{L_{1}}\right\Vert _{op}\leq s_{K}(W_{L:L_{1}+1})\left\Vert X\right\Vert _{op}r^{L_{1}},
\]
and therefore
\[
s_{K}(W_{L:L_{1}+1})\geq\frac{s_{K}(Z_{L})}{\left\Vert X\right\Vert _{op}r^{L_{1}}}\geq\frac{s_{K}(Y)-\epsilon_{1}}{\left\Vert X\right\Vert _{op}r^{L_{1}}}.
\]
This gives that 
\[
\frac{s_{1}(W_{L})^{2L_{2}}}{s_{K}(W_{L})^{2L_{2}}}\leq\frac{s_{1}(W_{L:L_{1}+1})^{2}}{s_{K}(W_{L:L_{1}+1})^{2}}+\frac{1+\kappa(W_{L:L_{1}+1})^{2}}{\left(\frac{(s_{K}(Y)-\epsilon_{1})^{2}}{\left\Vert X\right\Vert _{op}^{2}r^{2L_{1}}}-\frac{L_2^2}{2}r^{2(L_2-1)}\epsilon_2\right)}\frac{L_2^2}{2}r^{2(L_2-1)}\epsilon_2.
\]
Thus, the following chain of inequalities gives \eqref{eq:NC2}: 
\begin{align*}
\kappa(W_{L}) & \leq\left(\kappa(W_{L:L_{1}+1})^{2}(1+\epsilon)+\epsilon\right)^{\frac{1}{2L_{2}}}\\
 & \leq\kappa(W_{L:L_{1}+1})^{\frac{1}{L_{2}}}(1+\epsilon)^{\frac{1}{2L_2}}+{\kappa(W_{L:L_{1}+1})^{\frac{1}{L_{2}}-1}}(1+\epsilon)^{\frac{1}{2L_2}-1}\epsilon\\
 & \leq\kappa(W_{L:L_{1}+1})^{\frac{1}{L_{2}}}(1+\epsilon)^{\frac{1}{2L_2}}+{\kappa(W_{L:L_{1}+1})^{\frac{1}{L_{2}}-1}}\epsilon,
\end{align*}
using the concavity of the $2L_{2}$-th root. 

\textbf{NC(2+3):} 
We start with the proof of the claim \eqref{eq:NC3} on NC3, as the derivations will be used in the proof of the claim \eqref{eq:NC2*} on NC2 later. Since the cosine similarity does not depend on the scale of the involved matrices, we perform the following rescaling: denoting $\alpha=\left\Vert W_L\right\Vert$, we define $W_L'=\frac{W_L}{\alpha}$ and $Z_{L-1}'=\alpha Z_{L-1}.$ Similarly to before we denote $(z')^{(L-1)}_{ci}$ the $i$-th sample of the $c$-th class in the matrix $
Z_{L-1}'.$ Then, we write
\begin{align*}
\text{NC3}(Z_{L-1}', W_L')&=\frac{1}{N}\sum_{c,i}
\cos((z')^{(L-1)}_{ci}, (W_{L}')_{c:})=\frac{1}{N}\sum_{c,i}
\frac{\left\langle (z')^{(L-1)}_{ci}, (W'_{L})_{c:}\right\rangle}{\left\Vert (z')^{(L-1)}_{ci}\right\Vert_2 \left\Vert (W'_{L})_{c:}\right\Vert_2} \\ &\ge \frac{\sum_{c,i}
\left\langle (z')^{(L-1)}_{ci}, (W'_L)_{c:}\right\rangle}{N\kappa(W_L)(1+\epsilon_1)} =\frac{2\left\langle Z_{L-1}', (W'_L)^\top Y\right\rangle}{2N\kappa(W_L)(1+\epsilon_1)}\\
&=\frac{\norm{Z'_{L-1}}_F^2+\norm{(W'_L)^\top Y}_F^2-\norm{Z'_{L-1}-(W'_L)^\top Y}_F^2}{2N\kappa(W_L)(1+\epsilon_1)}.
\end{align*}
Here, the first inequality follows from upper bounding $\norm{(W'_L)_{c:}}_2$ trivially by 1 and $$\norm{(z')^{(L-1)}_{ci}}_2\le \kappa(W_L)\norm{z^{(L)}_{ci}}_2\le \kappa(W_L)\left(\norm{y_c-z^{(L)}_{ci}}_2+\norm{y_c}_2\right)\le \kappa(W_L)(1+\epsilon_1).$$ Now, 
$$\norm{Z'_{L-1}}_F^2\ge \norm{Z_L}_F^2\ge \left(\norm{Y}_F-\norm{Z_L-Y}_F\right)^2=(\sqrt{N}-\epsilon_1)^2.$$ Furthermore, we readily have that $\norm{(W'_L)^\top Y}_F\ge \frac{\sqrt{N}}{\kappa(W_L)}.$  Finally, we have:
\begin{align*}
\norm{Z'_{L-1}-(W_L')^\top Y}_F\le \norm{Z'_{L-1}-(W'_L)^{+}Y}_F+\norm{(W'_L)^+Y-(W'_L)^\top Y}_F. 
\end{align*} 
From \eqref{eq:bdZ}, we obtain that  \begin{align}\label{eq:big_triangle_ineq}
    \norm{Z'_{L-1}-(W_L')^{+}Y}_F\le r^2\left(\frac{\epsilon_{1}}{s_K(Y)-\epsilon_1}+\sqrt{n_{L-1} \epsilon_2}\right)=r\Psi(\epsilon_1, \epsilon_2, r).
\end{align} 
Finally, we proceed with upper bounding $\norm{(W'_L)^+Y-(W'_L)^\top Y}_F$, which can be done via a sandwich bound on the singular values using the conditioning number. This gives 
$$\norm{(W'_L)^+Y-(W'_L)^\top Y}_F\le \sqrt{K}\left( \kappa(W_L)-\frac{1}{\kappa(W_L)}\right)\le \sqrt{K}(\kappa(W_L)^2-1).$$ Putting all the obtained bounds together, we get the desired bound \eqref{eq:NC3} on NC3. 

Finally, to pass from the bound \eqref{eq:NC2} on $\kappa(W_{L})$ to the bound \eqref{eq:NC2*} on NC2, we use the inequality in~\eqref{eq:big_triangle_ineq} obtained in the proof of NC3. Note that obtaining a bound on $\kappa(\Bar{Z}'_{L-1})$ is equivalent to obtaining a bound on $\kappa(\Bar{Z}_{L-1})$ since multiplying by a scalar does not change the condition number. Using~\eqref{eq:big_triangle_ineq} we get: \begin{align*}
    \norm{\Bar{Z}'_{L-1}-(W'_L)^+}_F&=\norm{Z'_{L-1}Y^+-(W'_L)^+YY^+}_F \le \norm{Y^+}_{op} \norm{Z'_{L-1}-(W'_L)^+ Y}_F \\ &\le \frac{r\Psi(\epsilon_1, \epsilon_2, r)}{s_K(Y)}.
\end{align*}
As $\|(W'_L)^+\|_{op}=\kappa(W_L)$ and $s_K((W'_L)^+)=1$, we conclude that
$$\kappa(\Bar{Z}'_{L-1})\le \frac{\kappa(W_L)+r\Psi(\epsilon_1, \epsilon_2, r)(s_K(Y))^{-1}}{1-r\Psi(\epsilon_1, \epsilon_2, r)(s_K(Y))^{-1}},$$ which gives the desired bound in \eqref{eq:NC2*}. 

\end{proof}

\PL*
\begin{proof}
Let $\theta \in B(\theta_{0}, r_0)$. Then, the following chain of inequalities holds:
\begin{equation}\label{eq:chainPL}
\begin{split}    
\left\Vert \nabla C_0(\theta)+\lambda\theta\right\Vert_2^{2} & \geq\left(\left\Vert \nabla C_0(\theta)\right\Vert_2 -\lambda\left\Vert \theta\right\Vert_2 \right)^{2}\\
 & \geq\left(\sqrt{\frac{\alpha}{2}C_0(\theta)}-\lambda\left\Vert \theta\right\Vert_2 \right)^{2}\\
 & =\left(\sqrt{\frac{\alpha}{2}C_{\lambda}(\theta)-\frac{\alpha\lambda}{4}\left\Vert \theta\right\Vert_2 ^{2}}-\lambda\left\Vert \theta\right\Vert_2 \right)^{2}\\
 & \geq\left(\sqrt{\frac{\alpha}{2}C_{\lambda}(\theta)}-\left(\lambda+\sqrt{\frac{\alpha\lambda}{4}}\right)\left\Vert \theta\right\Vert_2 \right)^{2}\\
 & \geq\frac{\alpha}{4}C_{\lambda}(\theta)-\lambda\left(\sqrt{\frac{\alpha}{4}}+\sqrt{\lambda}\right)^{2}\left\Vert \theta\right\Vert_2^{2}\\
 & \geq\frac{\alpha}{4}\left(C_{\lambda}(\theta)-\lambda\left(1+\sqrt{\frac{4\lambda}{\alpha}}\right)^{2}\left(\left\Vert \theta_{0}\right\Vert_2 +r_0\right)^{2}\right).
\end{split}
\end{equation}
Here,
in the second line we use that $\sqrt{\frac{\alpha}{2}C_0(\theta)}\ge \lambda\left\Vert \theta\right\Vert_2$, which follows from $C_{\lambda}(\theta)\ge \lambda m_{\lambda}$ (otherwise, the claim is trivial); in the fourth line we use that $\sqrt{a-b}\geq\sqrt{a}-\sqrt{b}$ for $a\ge b$; in the fifth line we use
that $(a-b)^{2}\geq\frac{a^{2}}{2}-b^{2}$ for all $a, b$; and in the sixth line we use that $\theta$ is in the ball $B(\theta_{0}, r_0)$. As the LHS of \eqref{eq:chainPL} equals $\left\Vert \nabla C_{\lambda}(\theta))\right\Vert_2 ^{2}$, this proves \eqref{eq:shiftedPL}.

Next, let $(\theta_k)_{k\in\mathbb N}$ be the GD trajectory. 
Pick $k$ s.t. $\theta_k\in B(\theta_0, r_0)$. Then,
\begin{align*}
& C_{\lambda}(\theta_{k+1})-C_{\lambda}(\theta_k)  =-\eta\int_{0}^{1}\langle \nabla C_{\lambda}(\theta_k-s\eta\nabla C_{\lambda}(\theta_k)), \nabla C_{\lambda}(\theta_k)\rangle ds\\
 & =-\eta\left\Vert \nabla C_{\lambda}(\theta_k)\right\Vert_2 ^{2}+\eta\int_{0}^{1}\langle \nabla C_\lambda(\theta_k-s\eta\nabla C_{\lambda}(\theta_k))-\nabla C_\lambda(\theta_k), \nabla C_{\lambda}(\theta_k)\rangle ds\\
 & \leq-\eta\left\Vert \nabla C_{\lambda}(\theta_k)\right\Vert_2 ^{2}+\eta\int_{0}^{1}\left\Vert \nabla C_{\lambda}(\theta_k)\right\Vert_2 \left\Vert \nabla C_\lambda(\theta_k-s\eta\nabla C_{\lambda}(\theta_k))-\nabla C_\lambda(\theta_k)\right\Vert_2 ds\\
 & \leq-\eta\left\Vert \nabla C_{\lambda}(\theta_k)\right\Vert_2 ^{2}+\eta^{2}\beta_1\left\Vert \nabla C_{\lambda}(\theta_k)\right\Vert_2 ^{2}\\
 & =-\eta(1-\eta\beta_1)\left\Vert \nabla C_{\lambda}(\theta_k)\right\Vert_2 ^{2}\\
 & \le -\frac{\eta}{2}\left\Vert \nabla C_{\lambda}(\theta_k)\right\Vert_2 ^{2}\\
 & \leq-\eta\frac{\alpha}{8}\left(C_{\lambda}(\theta_k)-\lambda m_{\lambda}\right).
\end{align*}
Here, in the fourth line we use that the gradient is $\beta_1$-Lipschitz in $B(\theta_0, r_0)$; in the sixth line we use that $\eta<1/(2\beta_1)$; and in the last line we use \eqref{eq:shiftedPL}. Thus, as long as $\theta_j\in B(\theta_0, r_0)$ for all $j\in [k]$, by iterating the argument above, we have
\[
C_{\lambda}(\theta_k)-\lambda m_{\lambda}\leq\left(C_{\lambda}(\theta_{0})-\lambda m_{\lambda}\right)\left(1-\eta\frac{\alpha}{8}\right)^{k}.
\]
This readily implies that, by letting $k_1$ be the first index $k$ s.t.\ $C_{\lambda}(\theta_{k})\leq2\lambda m_{\lambda}$, \eqref{eq:Tbound} holds. 
Finally, the distance $\left\Vert \theta_{k_1}-\theta_{0}\right\Vert_2 $
is upper bounded by 
\begin{align*}
\sum_{k=0}^{k_1-1}\eta &\left\Vert \nabla C_{\lambda}(\theta_{k})\right\Vert_2   =\eta\sum_{k=0}^{k_1-1}\frac{\left\Vert \nabla C_{\lambda}(\theta_{k})\right\Vert_2 ^{2}}{\left\Vert \nabla C_{\lambda}(\theta_{k})\right\Vert_2 }\\
 & \leq\frac{4}{\sqrt{\alpha}}\sum_{k=0}^{k_1-1}\frac{C_{\lambda}(\theta_{k})-C_{\lambda}(\theta_{k+1})}{\sqrt{C_{\lambda}(\theta_{k})-\lambda m_{\lambda}}}\\
 & =\frac{4}{\sqrt{\alpha}}\sum_{k=0}^{k_1-2}\frac{C_{\lambda}(\theta_{k})-C_{\lambda}(\theta_{k+1})}{\sqrt{C_{\lambda}(\theta_{k})-\lambda m_{\lambda}}}+\frac{4}{\sqrt{\alpha}}\frac{C_{\lambda}(\theta_{k_1-1})-C_{\lambda}(\theta_{k_1})}{\sqrt{C_{\lambda}(\theta_{k_1-1})-\lambda m_{\lambda}}}\\
 & \leq\frac{8}{\sqrt{\alpha}}\sum_{k=0}^{k_1-2}\frac{C_{\lambda}(\theta_{k})-C_{\lambda}(\theta_{k+1})}{\sqrt{C_{\lambda}(\theta_{k+1})-\lambda m_{\lambda}}+\sqrt{C_{\lambda}(\theta_{k})-\lambda m_{\lambda}}}+\frac{8}{\sqrt{\alpha}}\frac{C_{\lambda}(\theta_{k_1-1})-\lambda m_\lambda}{\sqrt{C_{\lambda}(\theta_{k_1-1})-\lambda m_{\lambda}}}\\
 & =\frac{8}{\sqrt{\alpha}}\sum_{k=0}^{k_1-2}\left(\sqrt{C_{\lambda}(\theta_{k})-\lambda m_{\lambda}}-\sqrt{C_{\lambda}(\theta_{k+1})-\lambda m_{\lambda}}\right)+\frac{8}{\sqrt{\alpha}}\sqrt{C_{\lambda}(\theta_{k_1-1})-\lambda m_{\lambda}}\\
 & =\frac{8}{\sqrt{\alpha}}\sqrt{C_{\lambda}(\theta_{0})-\lambda m_\lambda} \leq \frac{8}{\sqrt{\alpha}}\sqrt{C_{\lambda}(\theta_{0})},
\end{align*}
which concludes the proof. 
\end{proof}

\mainGD*
\begin{proof}
By Lemma 4.1 in \citep{QuynhMarco2020}, the loss $C_0(\theta)$ satisfies the $\alpha$-PL inequality with $$\alpha=4\gamma^{L-2}\svmin{Z_1}\prod_{p=3}^L \svmin{W_p},$$ where we have also used that $\sigma'$ is lower bounded by $\gamma$ by Assumption \ref{ass:act}. Thus, by taking $r_0=\frac{1}{2}\min(\lambda_F, \min_{\ell\in \{3, \ldots, L\}} \lambda_\ell)$, we have that, for all $\theta\in B(\theta_0, r_0)$, $C_0(\theta)$ satisfies the $\alpha$-PL inequality with $\alpha=2^{-(L-3)}\gamma^{L-2}\lambda_F\lambda_{3\to L}$.

By using Assumption \ref{ass:init} and that $\lambda$ is upper bounded by $\frac{2C_0(\theta_0)}{\norm{\theta_0}_2^2}$ in \eqref{eq:ublambdaeta}, one can readily verify that $r_0\geq 8\sqrt{C_{\lambda}(\theta_{0})/\alpha}$. Furthermore, by Lemma \ref{lem:Lipschitzness_gradient_bounded_weights}, we have that $\nabla C_0(\theta)$ is $\beta_1$-Lipschitz for all $\theta\in B(\theta_0, r_0)$. Hence, we can apply Proposition \ref{prop:PL}, which gives that, for some $k_1$ upper bounded in \eqref{eq:Tbound},  $$C_{\lambda}(\theta_{k_1})\leq2\lambda m_{\lambda}\le \epsilon_1^2,$$ where the last inequality uses again the upper bound on $\lambda$ in \eqref{eq:ublambdaeta}.

With gradient flow, the regularized loss would only
decrease further after $k_1$ steps. Since we are working with gradient descent,
we simply need to assume that the learning rate is small enough to
guarantee a decreasing loss. If the gradient $\nabla C_{\lambda}(\theta)$
is $\beta_2$-Lipschitz, then
\begin{align*}
C_{\lambda}(\theta_{k+1})&-C_{\lambda}(\theta_k)  =C_{\lambda}(\theta_k-\eta\nabla C_{\lambda}(\theta_k))-C_{\lambda}(\theta_k)\\
 & =-\eta\int_{0}^{1}\left\langle \nabla C_{\lambda}(\theta_k-s\eta\nabla C_{\lambda}(\theta_k)),\nabla C_{\lambda}(\theta_k)\right\rangle ds\\
 & \leq-\eta\left\Vert \nabla C_{\lambda}(\theta_k)\right\Vert_2^{2}\\ &\hspace{1.1em}+\eta\left\Vert \nabla C_{\lambda}(\theta_k)\right\Vert_2 \max_{s\in[0,1]}\left\Vert \nabla C_{\lambda}(\theta_k)-\nabla C_{\lambda}(\theta_k-s\eta\nabla C(\theta_k))\right\Vert_2 \\
 & \leq-\eta\left\Vert \nabla C_{\lambda}(\theta_k)\right\Vert_2^{2}+\eta^{2}\beta_2\left\Vert \nabla C_{\lambda}(\theta_k)\right\Vert _2^{2}\\
 & =-\eta(1-\eta\beta_2)\left\Vert \nabla C_{\lambda}(\theta_k)\right\Vert ^{2}_2,
\end{align*}
which is non-positive as long as $\eta\leq 1/\beta_2$. Furthermore, 
as long as the regularized loss is decreasing, the parameter norm is bounded by $\epsilon_1\sqrt{2/\lambda}$. Lemma \ref{lem:Lipschitzness_gradient_bounded_weights} then implies that the gradient is $\beta_2 = 5N\beta b^{3}\max\left(1, \epsilon_1^{3L}\left(\frac{2}{\lambda}\right)^{3L/2}\right)L^{5/2}$-Lipschitz and, therefore, $\eta\leq 1/\beta_2$ holds by \eqref{eq:ublambdaeta}. This allows us to conclude that, for all $k$ satisfying the lower bound in \eqref{eq:ublambdaeta}, $C_{\lambda}(\theta_k)\leq\epsilon_{1}^{2}$, hence the network achieves approximate interpolation, i.e.,
\begin{equation}\label{eq:appint}    
\|Z_L^k-Y\|_F\le \epsilon_1\sqrt{2}.
\end{equation}

Next, we show approximate balancedness. To do so, for $\ell\in \{L_1+2, \ldots, L-1\}$, we define $$T_\ell^k:=(W_{\ell+1}^k)^\top\cdots (W_{L}^k)^\top  (Z_L^k-Y) (Z_{L_1}^k)^\top (W_{L_1+1}^k)^\top\cdots (W_{\ell-1}^k)^\top.$$ 
Then, we have 
\begin{align*}
    W_\ell^{k+1} &(W_\ell^{k+1})^\top=((1-\eta\lambda)W_\ell^k-\eta T_\ell^k)((1-\eta\lambda)W_\ell^k-\eta T_\ell^k)^\top\\
    &=(1-\eta\lambda)^2W_\ell^k (W_\ell^k)^\top -(1-\eta\lambda)\eta(W_\ell^k (T_\ell^k)^\top + T_\ell^k (W_\ell^k)^\top)+ \eta^2 T_\ell^k (T_\ell^k)^\top.
\end{align*}
Similarly, 
\begin{align*}
 &   (W_{\ell+1}^{k+1})^\top W_{\ell+1}^{k+1}
    \\&=(1-\eta\lambda)^2(W_{\ell+1}^k)^\top W_{\ell+1}^k -(1-\eta\lambda)\eta((W_{\ell+1}^k)^\top T_{\ell+1}^k + (T_{\ell+1}^k)^\top W_{\ell+1}^k)+ \eta^2 (T_{\ell+1}^k)^\top T_{\ell+1}^k.
\end{align*}
Let us define
$$
D_\ell^k= (W_{\ell+1}^k)^\top W_{\ell+1}^k
- W_{\ell}^k (W_{\ell}^k)^\top.$$ 
Since $T_\ell^k (W_\ell^k)^\top= (W_{\ell+1}^k)^\top T_{\ell+1}^k$ and $W_\ell^k (T_\ell^k)^\top = (T_{\ell+1}^k)^\top W_{\ell+1}^k$, we have
\begin{align*}
   D_{\ell}^{k+1}=(1-\eta\lambda)^2 D_{\ell}^k+\eta^2((T_{\ell+1}^k)^\top T_{\ell+1}^k-T_\ell^k (T_\ell^k)^\top).
\end{align*}
Recall that, for all $k$ lower bounded in \eqref{eq:ublambdaeta}, $\|Z_L^k-Y\|_F\le \epsilon_1\sqrt{2}$ and $\|W_\ell^k\|_F \leq\|\theta^k\|_2\le \epsilon_1\sqrt{2/\lambda}$, which also implies that $\|Z_{L_1}\|_{F}\le \left(\epsilon_1\sqrt{2/\lambda}\right)^{L_1}\|X\|_{op}$.
Thus,
\begin{align*}
    \|D_{\ell}^{k+1}\|_{op}&\le (1-\eta\lambda)^2\|D_{\ell}^k\|_{op} +\eta^2(\|T_\ell^k\|_{op}^2+\|T_{\ell+1}^k\|_{op}^2)\\
    &\le (1-\eta\lambda)^2\|D_{\ell}^k\|_{op} + \eta^2 \|Z_{L_1}^k\|_{op}^2\|Z_L^k-Y\|_{op}^2(\prod_{j\neq \ell}\|W_j^k\|_{op}^2+\prod_{j\neq \ell+1}\|W_j^k\|_{op}^2)\\
    &\le (1-\eta\lambda)^2\|D_{\ell}^k\|_{op} + 4\eta^2\epsilon_1^2 \left(\frac{2\epsilon_1^2}{\lambda}\right)^{L_1+L-1}\|X\|_{op}^2.
\end{align*}
By using the upper bounds $\eta\le \frac{1}{2\lambda}$ and $\eta\leq \left(\frac{\lambda}{2\epsilon_1^2}\right)^{L_1+L}\frac{\epsilon_2}{4\|X\|_{op}^2}$ in \eqref{eq:ublambdaeta}, we have that:
\begin{itemize}
    \item if $\|D_\ell^k\|_{op}\ge \epsilon_2$, then $\|D_\ell^{k+1}\|_{op}\le (1-\eta\lambda)\|D_\ell^k\|_{op}$;
    \item if $\|D_\ell^k\|_{op}\le \epsilon_2$, then $\|D_\ell^{k+1}\|_{op}\le(1-\eta\lambda)\epsilon_2\le \epsilon_2$.
\end{itemize}
This implies that, for all $\bar k\ge 0$ and $k_1\ge \left\lceil \frac{\log\frac{\lambda m_{\lambda}}{C_{\lambda}(\theta_{0})-\lambda m_{\lambda}}}{\log(1-\eta\frac{\alpha}{8})}\right\rceil$,
\[
\|D_{\ell}^{k_{1}+\bar k}\|_{op}\le\max((1-\eta\lambda)^{\bar k}\|D_{\ell}^{k_{1}}\|_{op},\epsilon_{2}).
\]
Note that
$$
\|D_{\ell}^{k_{1}}\|_{op}\le \|W_{\ell+1}^{k_1}\|_{op}^2+\|W_{\ell}^{k_1}\|_{op}^2\le \frac{4\epsilon_1^2}{\lambda},
$$
which allows us to conclude that, for all $k$ lower bounded in \eqref{eq:ublambdaeta}, $$\|D_\ell^k\|_2\le \epsilon_2.$$
Finally, we have the following bounds on the representations and weights at step $k$:
\begin{align*}
    \left\Vert Z_{L-2}^k\right\Vert _{op} &\le \left(\epsilon_1\sqrt{\frac{2}{\lambda}}\right)^{L-2}\|X\|_{op}, \\
    \left\Vert Z_{L-1}^k\right\Vert _{op}&\le \left(\epsilon_1\sqrt{\frac{2}{\lambda}}\right)^{L-1}\|X\|_{op}, \\
    \left\Vert W_{\ell}^k\right\Vert _{op}&\leq \epsilon_1\sqrt{\frac{2}{\lambda}}, \,\,\,\mbox{ for } \ell\in \{L_1+1, \ldots, L\}.
\end{align*}
Hence, an application of Theorem \ref{th:NC1} proves the desired result. \end{proof}

\goodlossimpliesnctwo*
\begin{proof}
Let us split the parameter norm into the contribution from the nonlinear
and linear layers:
\[
\left\Vert \theta\right\Vert_2 ^{2}=\left\Vert \theta_{nonlin}\right\Vert_2 ^{2}+\left\Vert \theta_{lin}\right\Vert_2 ^{2}.
\]
We first lower bound both parts in terms of the product of
the linear part $W_{L:L_{1}+1}$. This then allows
us to bound the conditioning of $W_{L:L_{1}+1}$. We start with the nonlinear part:
$$\left\Vert Z_{L_{1}}\right\Vert _{F}\leq\left\Vert X\right\Vert _{op}\prod_{\ell=1}^{L_{1}}\left\Vert W_{\ell}\right\Vert _{F}\leq\left\Vert X\right\Vert _{op}\left(\frac{1}{L_{1}}\left\Vert \theta_{nonlin}\right\Vert_2 ^{2}\right)^{\frac{L_{1}}{2}},$$
which implies that 
\[
\left\Vert \theta_{nonlin}\right\Vert_2 ^{2}\geq L_{1}\left(\frac{\left\Vert Z_{L_{1}}\right\Vert _{F}}{\left\Vert X\right\Vert _{op}}\right)^{\frac{2}{L_{1}}}\geq L_{1}\left(\frac{\left\Vert Z_{L_{1}}\right\Vert _{op}}{\left\Vert X\right\Vert _{op}}\right)^{\frac{2}{L_{1}}}.
\]
Since $\left\Vert Y-W_{L:L_{1}+1}Z_{L_{1}}\right\Vert _{F}\leq\epsilon_{1}$,
we have 
\[s_K(Y) \leq s_K(W_{L:L_1+1} Z_{L_1}) + \epsilon_1 \leq s_K(W_{L:L_1+1}) \left\Vert Z_{L_{1}}\right\Vert _{op} + \epsilon_1,
\]
so that $\left\Vert Z_{L_{1}}\right\Vert _{op}\geq\frac{s_K(Y)-\epsilon_{1}}{s_K(W_{L:L_{1}+1})}$ 
and thus
\begin{equation}    \label{eq:res1}
\left\Vert \theta_{nonlin}\right\Vert_2 ^{2}\geq L_{1}\left(\frac{s_K(Y)-\epsilon_{1}}{s_K(W_{L:L_{1}+1})\left\Vert X\right\Vert _{op}}\right)^{\frac{2}{L_{1}}}.
\end{equation}
Next, for the linear part, we know from Theorem 1 of \cite{dai_2021_repres_cost_DLN} that 
\[
\min_{(W_\ell)_{\ell=L_1+1}^L : A = W_{L:L_1+1}} \sum_{\ell=L_1+1}^L \| W_\ell \|_F^2 = (L-L_1) \sum_{i=1}^{\mathrm{Rank}(A)} s_i(A)^\frac{2}{L-L_1},
\]
so that 
\begin{equation}    \label{eq:res2}
\left\Vert \theta_{lin}\right\Vert_2 ^{2}\geq (L-L_1) \sum_{i=1}^{\mathrm{Rank}(W_{L:L_{1}+1})} s_i(W_{L:L_{1}+1})^\frac{2}{L-L_1} \geq (L-L_1)K+2\log\left|W_{L:L_{1}+1}\right|_{+},
\end{equation}
where we used the fact that $x^\frac{2}{L-L_1} \geq 1 + \frac{2}{L-L_1}\log x$.

We also have following bound on the condition number $\kappa(W_{L:L_{1}+1})$:
\begin{equation}    \label{eq:res3}
\kappa(W_{L:L_{1}+1})=\frac{s_{1}(W_{L:L_{1}+1})}{s_{K}(W_{L:L_{1}+1})}\leq\prod_{i=1}^{K}\frac{s_{i}(W_{L:L_{1}+1})}{s_{K}(W_{L:L_{1}+1})}=\frac{\left|W_{L:L_{1}+1}\right|_{+}}{s_K(W_{L:L_{1}+1})^{K}}.
\end{equation}

By combining \eqref{eq:res1}, \eqref{eq:res2}, and \eqref{eq:res3} (after applying the log on both sides of \eqref{eq:res3}), we obtain 
\begin{align*}
\left\Vert \theta\right\Vert_2 ^{2} & \geq L_{1}\left(\frac{s_K(Y)-\epsilon_{1}}{s_K(W_{L:L_{1}+1})\left\Vert X\right\Vert _{op}}\right)^{\frac{2}{L_{1}}}+(L-L_1)K+2\log\left|W_{L:L_{1}+1}\right|_{+}\\
 & \geq L_{1}\left(\frac{s_K(Y)-\epsilon_{1}}{s_K(W_{L:L_{1}+1})\left\Vert X\right\Vert _{op}}\right)^{\frac{2}{L_{1}}}+(L-L_1)K+2\log\kappa(W_{L:L_{1}+1})+2K\log s_K (W_{L:L_{1}+1}).
\end{align*}
The above is lower bounded by the minimum over all possible choices of $s_K(W_{L:L_{1}+1})$.
This minimum would be attained at $s_K(W_{L:L_{1}+1})=K^{-\frac{L_{1}}{2}}\frac{s_K(Y)-\epsilon_{1}}{\left\Vert X\right\Vert _{op}}$,
thus leading to the lower bound 
\[
\left\Vert \theta\right\Vert_2 ^{2}\geq LK+2\log\kappa(W_{L:L_{1}+1})-L_{1}K\log K+2K\log\frac{s_K(Y)-\epsilon_{1}}{\left\Vert X\right\Vert _{op}}.
\]

This implies 
\begin{align*}
2\log\kappa(W_{L:L_{1}+1}) & \leq\left\Vert \theta\right\Vert_2 ^{2}-LK+L_{1}K\log K-2K\log\frac{s_K(Y)-\epsilon_{1}}{\left\Vert X\right\Vert _{op}}\\
 & \leq c+L_{1}K\log K-2K\log\frac{s_K(Y)-\epsilon_{1}}{\left\Vert X\right\Vert _{op}}.
\end{align*}
\end{proof}

\globalisnc*
\begin{proof}
By assumption, there are parameters of the nonlinear part $\theta_{nonlin}$ such
that the representation at the end of the nonlinear layers already matches
the outputs $Z_{L_{1}}=Y$ with finite parameter norm $\left\Vert \theta_{nonlin}\right\Vert _2^{2}=c$.
We can now build a deeper network by simply setting all the linear
layers to be the $K$-dimensional identity, so that $\left\Vert \theta_{lin}\right\Vert_2 ^{2}=KL_2$,
leading to a total parameter norm of $\left\Vert \theta\right\Vert_2 ^{2}=c+KL_2$.
Since the outputs matches the labels exactly, we have that the
regularized loss is bounded by $\lambda(KL_2+c)/2$. This implies that any global minimizer $\theta^{*}$ satisfies 
\begin{align*}
\left\Vert Y-Z_{L}\right\Vert^2 _{F} & \leq\lambda(KL_2+c),\\
\left\Vert \theta^{*}\right\Vert_2 ^{2} & \leq KL_2+c.
\end{align*}
Choosing $\lambda\leq\frac{\epsilon_{1}^{2}}{KL_2 +c}$, we obtain
that the global minimizer $\theta^{*}$ must satisfy the assumptions
of Proposition \ref{prop:conditioning_from_interp+param_norm}, which readily gives the desired upper bound on $\kappa(W_{L:L_{1}+1})$.

To finish the proof, we show that all critical points of the regularized loss have balanced linear layers, so that we may relate the conditioning of $W_L$ to that of the product $W_{L:L_1+1}$. At any critical point, the gradient w.r.t. to any weight matrix amongst the linear layers must be zero, that is
\[
0=\nabla_{W_{\ell}}C_{\lambda}(\theta)=W_{L:\ell+1}^{\top}(Z_{L}-Y)Z_{L_{1}}^{\top}W_{\ell-1:L_{1}+1}^{\top}+\lambda W_{\ell}.
\]
This implies that $W_{\ell}=-\frac{1}{\lambda}W_{L:\ell+1}^{\top}(Z_{L}-Y)Z_{L_{1}}^{\top}W_{\ell-1:L_{1}+1}^{\top}$
for all linear layers $\ell$. Balancedness then follows directly
\[
W_{\ell}W_{\ell}^{\top}=-\frac{1}{\lambda}W_{L:\ell+1}^{\top}(Z_{L}-Y)Z_{L_{1}}^{\top}W_{\ell:L_{1}+1}^{\top}=W_{\ell+1}^{\top}W_{\ell+1}.
\]
Finally, the balancedness implies that
\[
\kappa(W_L)=\kappa(W_{L:L_{1}+1})^{\frac{1}{L_1}}\leq\left(\frac{\left\Vert X\right\Vert _{op}}{ s_K(Y)-\epsilon_{1}}\right)^{\frac{K}{L_1}}\exp\left(\frac{1}{2L_1}\left(c-L_{1}K+L_{1}K\log K\right)\right).
\]
\end{proof}

\largelearningrates*
\begin{proof}
As the NTK is a sum over all layers, we can lower bound it by the contribution of the linear layers only. Formally, we have 
\begin{align*}
    \left\Vert \nabla_{\theta}\mathrm{Tr}[Z_{L}A^\top]\right\Vert_2^{2}&\geq\left\Vert \nabla_{\theta_{lin}}\mathrm{Tr}[Z_{L}A^\top]\right\Vert_2 ^{2} \\
    &=\sum_{\ell=L_{1}+1}^{L}\left\Vert W_{\ell-1:L_{1}+1}Z_{L_{1}}A^\top W_{L:\ell+1}\right\Vert_F ^{2}
\end{align*}

Let $v_{1},\dots,v_{K}\in\mathbb{R}^{N}$ be $K$ orthonormal vectors
that span the preimage of $Z_{L}$ ($Z_{L}$ is rank $K$ as long
as $\epsilon_{1}\leq s_{K}(Y)$), and $e_{1},\dots,e_{K}\in\mathbb{R}^{K}$
be the standard basis of $\mathbb{R}^{K}$. We can then sum over $K^2$ possible choices of matrices $A=e_iv_j^T$ to obtain
\begin{align*}
K^2 \| \Theta \|_{op} \geq \sum_{i,j=1}^{K}\left\Vert \nabla_{\theta}(e_{i}^{T}Z_{L}v_{j})\right\Vert_2 ^{2} & \geq\sum_{i,j=1}^{K}\sum_{\ell=L_{1}+1}^{L}\left\Vert W_{\ell-1:L_{1}+1}Z_{L_{1}}v_{j}\right\Vert_2 ^{2}\left\Vert e_{i}^{T}W_{L:\ell+1}\right\Vert_2 ^{2}.
\end{align*}
We know use the fact that $P_{\mathrm{Im} W_{L:\ell}^T}W_{\ell-1:L_{1}+1}Z_{L_{1}} = W_{L:\ell}^+ W_{L:\ell}W_{\ell-1:L_{1}+1}Z_{L_{1}} = W_{L:\ell}^+ Z_{L}$ to obtain the lower bound
\begin{align*}
 K^2 \| \Theta \|_{op} & \geq\sum_{i,j=1}^{K}\sum_{\ell=L_{1}+1}^{L}\left\Vert W_{L:\ell}^{+}Z_{L}v_{j}\right\Vert_2 ^{2}\left\Vert e_{i}^{T}W_{L:\ell+1}\right\Vert_2 ^{2}\\
 & =\sum_{\ell=L_{1}+1}^{L}\left\Vert W_{L:\ell}^{+}Z_{L}\right\Vert _{F}^{2}\left\Vert W_{L:\ell+1}\right\Vert _{F}^{2}\\
 & \geq s_{K}(Z_{L})^{2}\sum_{\ell=L_{1}+1}^{L}\left\Vert W_{L:\ell}^{+}\right\Vert _{F}^{2}\frac{\left\Vert W_{L:\ell}\right\Vert _{F}^{2}}{\left\Vert W_{\ell}\right\Vert _{op}}\\
 & \geq\frac{\left(s_{K}(Y)-\epsilon_{1}\right)^{2}}{r^{2}}\sum_{\ell=L_{1}+1}^{L}\kappa(W_{L:\ell})^{2}.
\end{align*}
This then implies that 
\[
CL_{2}\geq\left\Vert \Theta\right\Vert _{op}\geq\frac{\left(s_{K}(Y)-\epsilon_{1}\right)^{2}}{K^{2}r^{2}}\sum_{\ell=L_{1}+1}^{L}\kappa(W_{L:\ell})^{2}.
\]
Let us now assume by contradiction that for all the layers $\ell\in \{L_{1}+1,\dots,L_{1}+M\}$
we have $\kappa(W_{L:\ell})^{2}>\frac{CL_{2}K^{2}r^{2}}{M\left(s_{K}(Y)-\epsilon_{1}\right)^{2}}$,
then 
\[
\frac{\left(s_{K}(Y)-\epsilon_{1}\right)^{2}}{K^{2}r^{2}}\sum_{\ell=L_{1}+1}^{L}\kappa(W_{L:\ell})^{2}>CL_{2},
\]
which yields a contradiction. Therefore, there must be
a layer within the layers $\ell\in\{L_{1}+1,\dots,L_{1}+M\}$ such that
$\kappa(W_{L:\ell})^{2}\leq\frac{CL_{2}K^{2}r^{2}}{M\left(s_{K}(Y)-\epsilon_{1}\right)^{2}}$.

Furthermore, since $\kappa(W_{L:\ell})\geq1$, we know that 
\[
\left\Vert \Theta\right\Vert _{op}\geq\frac{\left(s_{K}(Y)-\epsilon_{1}\right)^{2}}{K^{2}r^{2}}L_{2}.
\]
\end{proof}

\section{Technical Results}


\begin{lemma}
\label{lem:Lipschitzness_gradient_bounded_weights}Let $b\ge 1$ be s.t.\ $\|X_{:i}\|_2\le b$ for all $i$. Then, inside the set of parameters with bounded
weights $\left\Vert W_{\ell}\right\Vert _{op}\leq r_\ell$ for all $\ell\in [L]$ and $r_\ell\ge 1$, the gradient
of the loss $\nabla C_0(\theta)$ is $5N\beta b^{3}\left(\prod_{j=1}^{L}r_j\right)^3 L^{5/2}$-Lipschitz.
\end{lemma}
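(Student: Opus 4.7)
The plan is to decompose $\|\nabla C_0(\theta) - \nabla C_0(\theta')\|_2^2 = \sum_\ell \|\nabla_{W_\ell} C_0(\theta) - \nabla_{W_\ell} C_0(\theta')\|_F^2$ and control each layer-wise difference via a forward/backward recursion. Writing the per-layer gradient in outer-product form $\nabla_{W_\ell} C_0 = G_\ell Z_{\ell-1}^\top$, where $G_\ell$ is the backpropagated error signal and I write $G'_\ell, Z'_\ell$ for the same objects evaluated at $\theta'$, the triangle inequality yields
\[
\|\nabla_{W_\ell} C_0(\theta) - \nabla_{W_\ell} C_0(\theta')\|_F \le \|G_\ell - G'_\ell\|_F \|Z_{\ell-1}\|_{op} + \|G'_\ell\|_F \|Z_{\ell-1} - Z'_{\ell-1}\|_{op}.
\]
It then suffices to control each of the four quantities on the right hand side and aggregate across $\ell$, with one $\sqrt{L}$ emerging from the final $\ell_2$ combination.

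First, for the \emph{forward pass}, using $|\sigma(x)|\le|x|$ and $|\sigma'(x)|\le 1$ from Assumption~\ref{ass:act} (so $\sigma$ is $1$-Lipschitz with $\sigma(0)=0$), I would establish $\|Z_\ell\|_{op} \le b\sqrt{N} \prod_{j=1}^\ell r_j$ and, by a standard telescoping argument exploiting
\[
\|Z_\ell - Z'_\ell\|_F \le r_\ell \|Z_{\ell-1}-Z'_{\ell-1}\|_F + \|W_\ell-W'_\ell\|_{op}\|Z'_{\ell-1}\|_F,
\]
combined with Cauchy--Schwarz $\sum_k \|W_k-W'_k\|_{op} \le \sqrt{L}\,\|\theta-\theta'\|_2$, obtain a forward Lipschitz bound of order $b\sqrt{NL}\prod_j r_j\,\|\theta-\theta'\|_2$.

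Second, for the \emph{backward pass}, defining $G_L = Z_L - Y$ and $G_\ell = D_\ell W_{\ell+1}^\top G_{\ell+1}$, with $D_\ell$ a diagonal matrix of $\sigma'$ values on nonlinear layers (and $D_\ell = I$ on linear ones), the size bound $\|G_\ell\|_F\lesssim b\sqrt{N}\prod_j r_j$ follows easily from $\|D_\ell\|_{op}\le 1$. For the Lipschitz bound, I would apply the triangle inequality on the recursion to produce three contributions: one from $W_{\ell+1}$, one from $G_{\ell+1}$ (which continues the recursion), and one from $D_\ell$. The $D_\ell$-term is controlled by the $\beta$-Lipschitzness of $\sigma'$ together with the identity $\|W_\ell Z_{\ell-1} - W'_\ell Z'_{\ell-1}\|_F \le \|W_\ell-W'_\ell\|_{op}\|Z_{\ell-1}\|_F + r_\ell\|Z_{\ell-1}-Z'_{\ell-1}\|_F$, which plugs in the forward Lipschitz bound from the previous step. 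Unrolling over all layers yields a bound of order $\beta b^2 N (\prod_j r_j)^2 L\,\|\theta-\theta'\|_2$.

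Finally, plugging the forward and backward bounds into the per-layer decomposition and taking a $\sqrt{L}$ factor from the outer aggregation produces the stated Lipschitz constant of order $\beta N b^3 (\prod_j r_j)^3 L^{5/2}$. The three factors of $b\prod_j r_j$ come from the bounds on $\|Z_{\ell-1}\|_{op}$, on $\|G'_\ell\|_F$, and on $\|Z_{\ell-1}\|_F$ entering the $D_\ell$-term; the $L^{5/2}$ splits as one $L$ from the backward recursion, one $\sqrt{L}$ from Cauchy--Schwarz on $\sum_k\|W_k-W'_k\|_{op}$, and one $\sqrt{L}$ from the final aggregation across layers. The main obstacle is the delicate bookkeeping in the backward recursion: the $D_\ell$-perturbation couples the backward pass to the already-unrolled forward Lipschitz bound, so three perturbation sources (in $W_\ell$, in $Z_{\ell-1}$, and in $D_\ell$) must be telescoped simultaneously, and the constants must be tracked tightly (using $r_j,b\ge 1$ to absorb lower-order terms) so as to collapse the leading coefficient to $5$.
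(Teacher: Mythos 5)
Your plan is structurally the same as the paper's: a forward Lipschitz bound via telescoping, a backward error recursion with three perturbation sources (a changed $W$, a changed $D$, a changed terminal residual $Z_L-Y$), the $\beta$-Lipschitzness of $\sigma'$ to control the $D$-term, and a final aggregation across layers. The paper expresses both the forward perturbation and the gradient as a sum over training columns $i$, working with $\|X_{:i}\|_2\le b$ and per-sample diagonal matrices $D_{j,i}$, and then sums the per-column contributions to get the $N$ factor; your version phrases the same bookkeeping in whole-matrix Frobenius/operator norms and your $\nabla_{W_\ell}C_0 = G_\ell Z_{\ell-1}^\top$ factorisation is exactly their product formula written as backprop recursion. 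So there is no genuinely different key idea here.

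There are two bookkeeping inaccuracies that you should flag before executing the sketch. First, your backward size estimate $\|G_\ell\|_F \lesssim b\sqrt{N}\prod_j r_j$ undercounts by one factor of $\prod_j r_j$: since $G_L = Z_L - Y$ has $\|G_L\|_F \le b\sqrt{N}\prod_j r_j + \sqrt{N}$, the recursion gives $\|G_\ell\|_F \le \bigl(\prod_{j>\ell}r_j\bigr)\|G_L\|_F$, which is of order $b\sqrt{N}\bigl(\prod_{j>\ell}r_j\bigr)\bigl(\prod_j r_j\bigr)$, i.e.\ up to $(\prod_j r_j)^2$. Second, your attribution of exactly \emph{three} factors of $b\prod_j r_j$ to $\|Z_{\ell-1}\|_{op}$, $\|G'_\ell\|_F$, and $\|Z_{\ell-1}\|_F$ in the $D$-term hides a delicate pairing: the correct cubic scaling emerges because the backward chain carries $\prod_{j>\ell}r_j$ while $\|Z_{\ell-1}\|_{op}$ carries only $\prod_{j<\ell}r_j$, and the two combine to $\prod_{j\neq\ell}r_j\le\prod_j r_j$ (here $r_\ell\ge 1$ is used), plus one more $\prod_j r_j$ from $\|Z_L-Y\|$ and one from the forward Lipschitz in the $D$-term. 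Your compensating over- and under-estimates happen to land on the right $(\prod_j r_j)^3$, but the tally as written is inconsistent, and a careless execution using the stated bounds could equally well produce $(\prod_j r_j)^4$ or an extra $\sqrt{N}$. The paper's per-column formulation makes this pairing explicit (the forward chain naturally carries $\prod_{j<\ell}r_j$, not $\prod_j r_j$) and is the cleaner way to keep the $N$ and $\prod_j r_j$ powers under control.
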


\begin{proof}
Consider two parameters $\theta=(W_\ell)_{\ell=1}^L, \theta'=(W_\ell')_{\ell=1}^L$, and let $Z_m, Z_m'$ be the corresponding outputs at layer $m$. Then, we obtain the following telescopic sum: 
\begin{equation}
\begin{split}    
(Z_m)_{:i}-(Z_m')_{:i}&=\sum_{\ell=1}^{L}(W_{m}\cdots W_{\ell+1}\circ\sigma\circ W_{\ell}\circ\sigma\circ W'_{\ell-1}\cdots\circ W'_{1})(X_{:i})\\
&-(W_{m}\cdots{}W_{\ell+1}\circ\sigma\circ W'_{\ell}\circ\sigma\circ W'_{\ell-1}\circ\cdots\circ W'_{1})(X_{:i}),
\end{split}    
\end{equation}
so that
\begin{equation}
\begin{split}    
\left\Vert (Z_m)_{i:}-(Z_m')_{i:}\right\Vert _{2}&\leq\sum_{\ell=1}^{m}\prod_{j=1}^{m-1} r_j\left\Vert W_{\ell}-W'_{\ell}\right\Vert _{F}\left\Vert X_{:i}\right\Vert_2 \\
&\leq b\prod_{j=1}^{L-1} r_j\sqrt{m}\sqrt{\sum_{\ell=1}^{m}\left\Vert W_{\ell}-W'_{\ell}\right\Vert _{F}^{2}}\leq b\prod_{j=1}^{L-1} r_j \sqrt{m}\left\Vert \theta-\theta'\right\Vert_2 .
\end{split}    
\end{equation}

Now, the gradient equals
\[
\nabla C_0(\theta)=\left(\sum_{i=1}^{N}\prod_{j=1}^{\ell-1}D_{j,i} W_{j} X_{:i}(Y_{:i}-(Z_L)_{:i})^{\top}\prod_{j=\ell+1}^L  W_j D_{j,i}\right)^\top_{\ell=1,\dots,L}
\]
where $D_{j, i}$ is a diagonal matrix with diagonal entries
given equal to the vector $\sigma'((Z_j)_{:i})$ for $j\in [L_1]$ and it is equal to the identity otherwise. Similarly, we define $D_{j, i}'$ as a diagonal matrix with diagonal entries
given equal to the vector $\sigma'((Z_j')_{:i})$ for $j\in [L_1]$ and equal to the identity otherwise. Since $\sigma'$
is $\beta$-Lipschitz, we have 
\[
\left\Vert D_{j,i}-D_{j,i}'\right\Vert _{F}\leq\beta b\prod_{j=1}^{L-1} r_j\sqrt{m}\left\Vert \theta-\theta'\right\Vert_2 .
\]
Furthermore since $\sigma'\leq1$ ,we have that $\left\Vert D_{j,i}\right\Vert _{op}\leq1$.
By summing over all terms that need to be changed, we obtain
\begin{align*}
\left\Vert \nabla C_0(\theta)-\nabla C(\theta')\right\Vert_2  & \leq\sum_{\ell=1}^{L}\sum_{i=1}^{N}\left\Vert X_{:i}\right\Vert_2 \left\Vert Y_{:i}-(Z_L)_{:i}\right\Vert_2 \sum_{m\neq\ell}\prod_{j\not \in \{m, L\}}r_j\left\Vert W_{m}-W'_{m}\right\Vert _{F}\\
 & +\sum_{\ell=1}^{L}\sum_{i=1}^{N}\left\Vert X_{:i}\right\Vert_2 \left\Vert Y_{:i}-(Z_L)_{:i}\right\Vert_2 \sum_{m=1}^{L_{1}}\left(\prod_{j=1}^{L-1}r_j\right)^2\beta \sqrt{m}\left\Vert \theta-\theta'\right\Vert_2 \left\Vert X_{:i}\right\Vert_2 \\
 & +\sum_{\ell=1}^{L}\sum_{i=1}^{N}\left\Vert X_{:i}\right\Vert_2 \left(\prod_{j=1}^{L-1}r_j\right)^2\sqrt{L}\left\Vert \theta-\theta'\right\Vert_2 \left\Vert X_{:i}\right\Vert_2, 
\end{align*}
where the three terms correspond to the effect of changing the $W_{\ell}$'s,
the $D_{\ell}$'s and the $Z_\ell$'s respectively. This can
then be simplified to 
\begin{align*}
\left\Vert \nabla C_0(\theta)-\nabla C_0(\theta)\right\Vert_2  & \leq Nb^{2}\left(1+\prod_{j=1}^{L}r_j\right)\prod_{j=1}^{L-1}r_j L^{3/2}\left\Vert \theta-\theta'\right\Vert_2 \\
 & +Nb^{3}\left(1+\prod_{j=1}^{L}r_j\right)\left(\prod_{j=1}^{L-1}r_j\right)^2 LL_{1}^{3/2}\beta\left\Vert \theta-\theta'\right\Vert_2 \\
 & +Nb^{2}\left(\prod_{j=1}^{L-1}r_j\right)^2 L^{3/2}\left\Vert \theta-\theta'\right\Vert_2 \\
 & \leq5N\beta b^{3}\left(\prod_{j=1}^{L}r_j\right)^3 L^{5/2}\left\Vert \theta-\theta'\right\Vert_2 ,
\end{align*}
where we use the fact that $\left\Vert (Z_L)_{:i}\right\Vert_2 \leq b\prod_{j=1}^{L}r_j$.
\end{proof}

\begin{lemma} \label{lem:approximate_balancedness_relate_weight_to_product}
    If the network satisfies
\begin{itemize}
\item approximate balancedness $\left\Vert W_{\ell+1}^{T}W_{\ell+1}-W_{\ell}W_{\ell}^{T}\right\Vert _{op}\leq\epsilon_{2}$ for $\ell\in\{L_1+1,\cdots L-1\}$,
\item bounded weights $\left\Vert W_{\ell}\right\Vert _{op}\leq r$ for $\ell\in\{L_1+1,\cdots, L\}$,
\end{itemize}
then we have $\left\|\left(W_LW_L^\top\right)^{L_2}-W_{L:L_1+1}W_{L:L_1+1}^\top\right\|_{op}\leq \frac{L_2^2}{2}\epsilon_2 r^{2(L_2-1)}$, where $L_2=L-L_1$.
\end{lemma}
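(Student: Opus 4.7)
The idea is to interpolate between $W_{L:L_1+1}W_{L:L_1+1}^\top$ and $(W_LW_L^\top)^{L_2}$ by a sequence of matrices $(M_k)_{k=1}^{L_2}$, and bound each consecutive difference by repeatedly invoking the approximate balancedness assumption. Concretely, for $k\in\{1,\dots,L_2\}$ I would define
\begin{equation*}
M_k \;=\; W_L W_{L-1}\cdots W_{L_1+k+1}\;\bigl(W_{L_1+k}W_{L_1+k}^\top\bigr)^{k}\; W_{L_1+k+1}^\top\cdots W_{L-1}^\top W_L^\top ,
\end{equation*}
where the outer product of $W$'s is understood to be the identity when $k=L_2$. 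One checks directly that $M_1 = W_{L:L_1+1}W_{L:L_1+1}^\top$ and $M_{L_2} = (W_LW_L^\top)^{L_2}$, so by the triangle inequality $\|(W_LW_L^\top)^{L_2} - W_{L:L_1+1}W_{L:L_1+1}^\top\|_{op}\le \sum_{k=1}^{L_2-1}\|M_{k+1}-M_k\|_{op}$.

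The key algebraic identity is that $(W_{L_1+k+1}W_{L_1+k+1}^\top)^{k+1} = W_{L_1+k+1}(W_{L_1+k+1}^\top W_{L_1+k+1})^{k}W_{L_1+k+1}^\top$, which lets us rewrite
\begin{equation*}
M_{k+1} \;=\; W_L\cdots W_{L_1+k+1}\;\bigl(W_{L_1+k+1}^\top W_{L_1+k+1}\bigr)^{k}\;W_{L_1+k+1}^\top\cdots W_L^\top .
\end{equation*}
Thus $M_k$ and $M_{k+1}$ have the same outer sandwich $W_L\cdots W_{L_1+k+1}$ (of operator norm at most $r^{L_2-k}$) and differ only in the middle factor, which is $A^{k}$ versus $B^{k}$ for $A:=W_{L_1+k}W_{L_1+k}^\top$ and $B:=W_{L_1+k+1}^\top W_{L_1+k+1}$. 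By approximate balancedness $\|A-B\|_{op}\le\epsilon_2$, and $\|A\|_{op},\|B\|_{op}\le r^2$, so the standard telescoping identity $A^k-B^k=\sum_{j=0}^{k-1}A^j(A-B)B^{k-1-j}$ yields $\|A^k-B^k\|_{op}\le k\, r^{2(k-1)}\epsilon_2$.

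Combining these with sub-multiplicativity of the operator norm on the outer sandwich gives
\begin{equation*}
\|M_{k+1}-M_k\|_{op} \;\le\; r^{2(L_2-k)}\cdot k\, r^{2(k-1)}\epsilon_2 \;=\; k\,\epsilon_2\, r^{2L_2-2},
\end{equation*}
and summing $k$ from $1$ to $L_2-1$ gives $\tfrac{(L_2-1)L_2}{2}\epsilon_2\, r^{2(L_2-1)}\le \tfrac{L_2^2}{2}\epsilon_2\, r^{2(L_2-1)}$, as claimed. The only subtle step is verifying the identity $M_{k+1}=W_L\cdots W_{L_1+k+1}(W_{L_1+k+1}^\top W_{L_1+k+1})^{k}W_{L_1+k+1}^\top\cdots W_L^\top$ (i.e.\ the ``absorption'' of one $W_{L_1+k+1}$ on each side into the central power), which is a routine rewriting; everything else reduces to the telescoping bound and operator-norm sub-multiplicativity, so there is no substantial obstacle.
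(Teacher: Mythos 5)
Your argument is correct and is essentially the paper's own proof, just presented via an explicit interpolating sequence $M_1,\dots,M_{L_2}$ rather than the paper's recursive substitution (their $E_\ell$ and $\mathcal{E}_\ell$ are exactly your middle-factor telescoping term $A^k-B^k$ and the sandwiched difference $M_{k+1}-M_k$, respectively). The final bound $\sum_{k=1}^{L_2-1}k\,\epsilon_2 r^{2(L_2-1)}\le\frac{L_2^2}{2}\epsilon_2 r^{2(L_2-1)}$ matches the paper exactly.
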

\begin{proof}
    We denote $D_\ell:=W_\ell W_\ell^\top -W_{\ell+1}^\top W_{\ell+1}$ and $$E_\ell=\sum_{i=L_1+1}^\ell \left(W_{\ell+1}^\top W_{\ell+1}\right)^{i-L_1-1}D_\ell\left(W_\ell W_\ell^\top\right)^{\ell-i}.$$ We have that     \begin{align*}
        \left(W_\ell W_\ell^\top\right)^{\ell-L_1} &= D_\ell \left(W_\ell W_\ell^\top\right)^{\ell-L_1-1}+W_{\ell+1}^\top W_{\ell+1}\left(W_\ell W_\ell^\top\right)^{\ell-L_1-1}\\
        &= D_\ell\left(W_\ell W_\ell^\top\right)^{\ell-L_1-1}+W_{\ell+1}^\top W_{\ell+1} D_\ell\left(W_\ell W_\ell^\top\right)^{\ell-L_1-2}+  \left(W_{\ell+1}^\top W_{\ell+1}\right)^2\left(W_\ell W_\ell^\top\right)^{\ell-L_1-2}\\
        &=\dots\\
        &=\left(W_{\ell+1}^\top W_{\ell+1} \right)^{\ell-L_1}+\sum_{i=L_1+1}^\ell \left(W_{\ell+1}^\top W_{\ell+1}\right)^{i-L_1-1}D_\ell\left(W_\ell W_\ell^\top\right)^{\ell-i}\\
        &= \left(W_{\ell+1}^\top W_{\ell+1} \right)^{\ell-L_1}+E_\ell.
    \end{align*}
    Then,
    \begin{align*}
        W_{L:L_1+1} W_{L:L_1+1}^\top&=W_L\cdots W_{L_1+1} W_{L_1+1}^\top\cdots W_L^\top\\
        &= W_L\cdots W_{L_1+2}\left(W_{L_1+2}^\top W_{L_1+2}+E_{L_1+1}\right) W_{L_1+2}^\top\cdots W_L^\top\\
        &= W_L\cdots W_{L_1+3}\left(W_{L_1+2}^\top W_{L_1+2}\right)^2W_{L_1+3}^\top\cdots W_L^\top+\mathcal{E}_{L_1+1}\\
        &=\dots\\
        &=\left(W_LW_L^\top\right)^{L_2}+\sum_{i=L_1+1}^{L-1} \mathcal{E}_i,
    \end{align*}
    where $\mathcal{E}_i=W_L\cdots W_{i+1}E_iW_{i+1}^\top \cdots W_L^\top$. As a result, we have $$\left\|\left(W_LW_L^\top\right)^{L_2}-W_{L:L_1+1}W_{L:L_1+1}^\top\right\|_{op}=\left\| \sum_{i=L_1+1}^{L-1}\mathcal{E}_i\right\|_{op}\le\sum_{i=L_1+1}^{L-1}\left\|\mathcal{E}_i\right\|_{op}.$$
    Since $\|W_k\|_{op}\le r$ for all $k=\{L_1+1,\cdots L\}$, we have $\|E_\ell\|_{op}\le (\ell-L_1)\epsilon_2 r^{2(\ell-L_1-1)}$ and $\|\mathcal{E}_\ell\|_{op}\le r^{2(L-\ell)}(\ell-L_1)\epsilon_2 r^{2(\ell-L_1-1)}=(\ell-L_1) \epsilon_2 r^{2(L_2-1)}.$ Thus, we have
    $$\left\|\left(W_LW_L^\top\right)^{L_2}-W_{L:L_1+1}W_{L:L_1+1}^\top\right\|_{op}\le \frac{L_2^2}{2}\epsilon_2 r^{2(L_2-1)}.$$
\end{proof}

\end{document}